\date{}
\newtheorem{theorem}{Theorem}[]
\newtheorem{corollary}{Corollary}[]
\newtheorem{lemma}[theorem]{Lemma} 
\newtheorem{definition}{Definition}[]
\newcommand{\Real}{\mathbb R}
\newcommand{\norm}[1]{\left\lVert #1 \right\rVert}
\definecolor{mypink1}{rgb}{0.33, 0.76, 0.66}
\newcommand{\rb}[1]{\textcolor{blue}{[Ronen: #1]}}
\newcommand{\ag}[1]{\textcolor{red}{[ #1]}}
\ificcvfinal\pagestyle{empty}\fi
\begin{document}

\title{Algebraic Characterization of Essential Matrices and Their Averaging\\
in Multiview Settings}
\author{Yoni Kasten* \hspace{1cm} Amnon Geifman* \hspace{1cm} Meirav Galun  \hspace{1cm}Ronen Basri \\
Weizmann Institute of Science\\
{\tt\small  \{yoni.kasten,amnon.geifman,meirav.galun,ronen.basri\}@weizmann.ac.il}
}
\maketitle

\begin{abstract}
   Essential matrix averaging, i.e., the task of recovering camera locations and orientations in calibrated, multiview settings, is a first step in global approaches to Euclidean structure from motion. A common approach to essential matrix averaging is to separately solve for camera orientations and subsequently for camera positions. This paper presents a novel approach that solves  simultaneously for both camera orientations and positions. We offer a complete characterization of the algebraic conditions that enable a unique Euclidean reconstruction of $n$ cameras from a collection of $(^n_2)$ essential matrices. We next use these conditions to formulate essential matrix averaging as a constrained optimization problem, allowing us to recover a consistent set of essential matrices given a (possibly partial) set of measured essential matrices computed independently for pairs of images. We finally use the recovered essential matrices to determine the global positions and orientations of the $n$ cameras. We test our method on common SfM datasets, demonstrating high accuracy while maintaining efficiency and robustness, compared to existing methods.
\end{abstract}

\makeatletter
\def\blfootnote{\xdef\@thefnmark{}\@footnotetext}
\makeatother
\blfootnote{*Equal contributors}


\section{Introduction}
What algebraic conditions make a collection of $n \choose 2$ essential matrices consistent, in the sense that there exist $n$ Euclidean camera matrices that produce them? This fundamental question has not yet been answered in the literature. It is well known that ${3 \choose 2} = 3$ fundamental matrices are consistent if, and only if, the epipole of the third view is transferred correctly between each pair of views, i.e., for every $1 \le i,j,k \le 3,$ $\mathbf e_{ik}^TF_{ij}\mathbf e_{jk}=0$. Recent work~\cite{kasten2018gpsfm} presented a set of sufficient and necessary algebraic conditions that make $n \choose 2$ fundamental matrices in general position consistent. One could expect that essential matrices that fulfill those same conditions would be consistent with respect to Euclidean camera matrices.  However, these conditions are not sufficient and can be contradicted by a counter example, see one such construction in the supplementary material. 

Establishing consistency constraints for essential matrices is an important step toward producing essential matrix averaging algorithms. Given $n$ images $I_1,..,I_n$, a common approach for global Structure from Motion (SfM)  begins by robustly estimating essential matrices between pairs of views, $\{E_{ij}\}$, from which an estimate of the relative pairwise rotations $\{R_{ij}\}$ and translations $\{\mathbf{t}_{ij}\}$ are extracted.  Motion averaging then is performed typically in two steps: first the  absolute camera orientations $\{R_i\}$ are solved  by averaging the relative rotations. Then, using  the relative translations and the recovered absolute orientations, the absolute camera positions $\{\mathbf t_i\}$ are recovered. Finally, the obtained solution is refined by bundle adjustment.


Our goal in this paper is to establish a complete set of necessary and sufficient conditions for the consistency of essential matrices and to use these conditions to formulate  a one-step algorithm for averaging essential matrices. To achieve this goal we investigate an object called the $n$-views essential matrix, which is obtained by stacking the $n \choose 2$ essential matrices into a $3n\times3n$ matrix whose $i,j$'th $3 \times 3$ block is the essential matrix $E_{ij}$ relating the $i$'th and the $j$'th frames.   We prove that, in addition to projective consistency (introduced in~\cite{kasten2018gpsfm}), this matrix must have three pairs of eigenvalues each of the same magnitude but opposite signs, and its eigenvectors directly encode camera parameters. 

We use these results to introduce the first (to the best of our knowledge) essential matrix averaging algorithm. Given a noisy estimate of a subset of $n \choose 2$ essential matrices, our algorithm seeks to find the nearest consistent set of essential matrices. We formulate this problem as constrained optimization and solve it using ADMM. We then incorporate this algorithm in a global SfM pipeline and  evaluate our pipeline on the datasets of \cite{wilson2014robust}, showing superior accuracies relative to state of the art methods on almost all image collections while also maintaining efficiency. 



\section{Related work}

Approaches for Euclidean motion averaging can be divided into two main categories: Incremental methods \cite{klopschitz2010robust,snavely2008modeling,agarwal2009building,kasten2019resultant,wu2013towards} begin with a small subset of frames and produce an initial reconstruction. The rest of cameras are then used sequentially for reconstruction. These methods are very successful and quite robust. However, they have to apply bundle adjustment refinement  at every step to prevent camera drift. Consequently, these method are computationally demanding when applied to  large data sets. 

Global methods \cite{arie2012global,wilson2014robust,goldstein2016shapefit,jiang2013global,ozyesil2015robust}, in contrast, recover the motion parameters simultaneously for all the frames.  Typical global SfM pipelines proceed by applying a camera  orientation solver, followed by a location solver. \\
\uline{\textbf{Global orientation solvers}}   \cite{arie2012global,martinec2007robust,tron2009distributed,hartley2013rotation,chatterjee2018robust} solve for the absolute orientations of the cameras given relative rotation measurements between pairs of images. \cite{arie2012global,martinec2007robust} derive  closed form   solutions that minimize a least squares objective constructed from the pairwise relative orientations. These methods are very efficient but due to the relaxed orthonormality requirement, the result is  usually suboptimal. Other methods \cite{tron2009distributed,hartley2013rotation,chatterjee2018robust} utilize the Lie algebra structure of the rotation group to perform rotation averaging in $SO(3)$. These methods, however, often converge to  local minima. Recently, \cite{eriksson2018rotation} has used the strong duality principle to find the global minimum under certain conditions.\\
\uline{\textbf{Global location solvers}}  \cite{arie2012global,wilson2014robust,ozyesil2015robust} assume known  camera orientations  and solve for the absolute positions of the cameras by using the noisy relative translations.  \cite{arie2012global} uses  point correspondences to find a least squares solution for the absolute positions. \cite{wilson2014robust} formulates a highly non convex objective and solves for the absolute translations utilizing the Levenberg-Marquet algorithm with random initialization. \cite{ozyesil2015robust} uses a similar objective while formulating a convex relaxation. \cite{cui2015linear} uses linear global method that minimizes geometric error in triplet of views while considering feature tracks.   All the aforementioned methods are highly dependent on accurate estimation of the absolute rotations of the cameras, which result from a rotation averaging method.\\
\uline{\textbf{Integrative Methods}}: \cite{sweeney2015optimizing} formulates the problem of fundamental and essential averaging as global optimization through minimizing the epipolar transfer error. While their method manages to improve the consistency of viewing graphs, it is unable  to generate a consistent reconstruction, and so it requires postprocessing steps of both rotation and translation averaging. \cite{sengupta2017new} has introduced the concept of a multi-view essential/fundamental matrix. Their work, however, established only a partial list of constraints. Moreover, their use of a complicated non-convex objective allowed them to  only  refine a complete reconstruction. \cite{kasten2018gpsfm} has introduced a complete set of consistency conditions  for fundamental matrices. They   formulate a robust optimization objective and demonstrate state of the art projective reconstructions. Their method, however, is limited to projective settings and is inapplicable to calibrated settings, i.e., for Euclidean reconstruction. \cite{tron2009distributed} suggested a method that optimizes first for camera positions  and then for  their orientations, and as a post processing   simultaneously optimizes for both. However, this method is sensitive to outliers.  Recent work explored the properties of the manifold of  essential matrices \cite{tron2017space}. Their characterization, however, is   suitable only for a single essential matrix and not for general multiview settings. Finally, \cite{aholt2014ideal,heyden1997algebraic} explore general algebraic properties of multi-view settings.

Our paper extends the work of \cite{kasten2018gpsfm,sengupta2017new} by introducing a complete set of necessary and sufficient conditions for consistency of multiview essential matrices and by proposing an efficient and robust optimization algorithm for essential matrix averaging that incorporates these conditions.

\section{Theory }

Let $I_1, ..., I_n$ denote a collection of $n$ images of a static scene captured respectively by cameras $P_1,...,P_n$. Each camera $P_i$ is represented by a $3 \times 4$ matrix $P_i=K_i R_i^T[I,-t_i]$ where $K_{i}$ is a $3\times 3$ calibration matrix, $\bf{t}_i\in \mathbb{R}^3$ and $R_i \in SO(3)$ denote the position and orientation of $P_i$, respectively, in some global coordinate system.   We further denote $V_i=K_i^{-T}R_i^{T}$, so $P_i=V_i^{-T}[I,-{\bf t}_i]$.  Consequently, let $\bold{X}=(X,Y,Z)^T$ be a scene point in the global coordinate system. Its projection onto $I_i$ is given by ${\bf x}_i = \bold{X}_{i} / Z_i$, where $\bold{X}_{i}=(X_i, Y_i, Z_i)^T = K_{i}R_{i}^{T}(\bold{X}-{\bf t}_{i})$.  
 

We denote the fundamental matrix and the essential matrix between images $I_i$ and $I_j$ by $F_{ij}$ and $E_{ij}$ respectively. It was shown in \cite{arie2012global} that $E_{ij}$ and $F_{ij}$ can be written as
$$E_{ij} = R_i^T (T_i - T_j) R_j$$
$$F_{ij} = K_i^{-T} E_{ij} K_j^{-1}=V_i(T_i-T_j)V_j^T$$
where $T_i=[{\bf t}_i]_{\times}$.

Throughout this paper we assume that all calibration matrices are known, so our work deals with solving the problem of Euclidean SfM.

The derivations in this paper adopt the definitions   ``$n$-view  fundamental matrix" and ``consistent $n$-view fundamental matrix" from \cite{kasten2018gpsfm}. We first repeat these definitions, for the sake of clarity, and then define analogous definitions for the calibrated case.  In the definitions below we  denote the space of all the $3n \times 3n$ symmetric matrices by $\mathbb{S}^{3n}$.    
\begin{definition}   \label{def:mv_fundamental} 
A matrix $F \in \mathbb{S}^{3n}$, whose $3 \times 3$ blocks are denoted by $F_{ij}$, is called an {\bf n-view  fundamental matrix} if  $ \forall  i\neq j \in \{1,..., n\},$ $rank(F_{ij})=2$ and $\forall  i $  $F_{ii}=0$.
\end{definition}

\begin{definition}   \label{def:consistent_mv_fundamental} 
An $n$-view  fundamental matrix $F$ is called {\bf consistent}  if there exist camera matrices $P_1,...,P_n$ of the form $P_{i}=V_{i}^{-T}[I,\mathbf{t}_{i}]$ such that $F_{ij}=V_{i}([\mathbf{t}_{i}]_{\times}-[\mathbf{t}_{j}]_{\times})V_{j}^{T}$.
\end{definition}

\begin{definition}  \label{def:mv_essential} 
A matrix $E \in \mathbb{S}^{3n}$, whose $3 \times 3$ blocks are denoted by $E_{ij}$, is called an {\bf n-view essential  matrix} if   $\forall  i\neq j \in \{1\dots n\}$ , $rank(E_{ij})=2$, the two singular values of $E_{ij}$ are equal, and $\forall  i $ $  E_{ii}=0$. We denote the set of all such matrices by ${\cal E}$.
\end{definition}

\begin{definition}

 An $n$-view essential  matrix $E$ is called {\bf consistent} if there exist $n$ rotation matrices $\{R_i\}_{i=1}^{n}$ and $n$  vectors $\{{\bf t}_i\}_{i=1}^{n}$  such that $E_{ij}= R_{i}^{T}([\mathbf{t}_{i}]_{\times}-[\mathbf{t}_{j}]_{\times})R_{j}$. \label{def:consistent_mv_essential}
\end{definition}

Note that any (consistent) $n$-view essential matrix is also a (consistent) $n$-view fundamental matrix. In \cite{kasten2018gpsfm}   necessary
and  sufficient conditions for the consistency of the $n$-view fundamental matrix were proved. The main theoretical contribution of \cite{kasten2018gpsfm} is summarized in Theorem~\ref{thm:consistent_direct}. For  the consistency of $n$-view essential matrix, a partial set of necessary conditions were derived in \cite{sengupta2017new}. Those are summarized below in Theorem \ref{thm:necessary_essential}.

\begin{theorem}\label{thm:consistent_direct} 
 An n-view fundamental matrix $F$ is consistent with a set of $n$ cameras whose centers are not all collinear if, and only if, the following conditions hold: 
\begin{enumerate}
\item $Rank(F)=6$ and $F$ has exactly 3 positive and 3 negative eigenvalues.
\item $Rank(F_i) = 3$ for all $i =1, ..., n$, where $F_i$  is the $3 \times 3n$  $i^{th}$ block row of $F$.\end{enumerate}
\end{theorem}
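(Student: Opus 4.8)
My plan is to reduce the statement to a rank-and-signature analysis of a single structured matrix and then to a gauge-fixing argument. The starting observation is that, unpacking Definition~\ref{def:consistent_mv_fundamental}, a consistent $F$ admits the congruence factorization $F = A\,\mathcal{T}\,A^{T}$, where $A=\mathrm{diag}(V_1,\dots,V_n)$ is a $3n\times3n$ invertible block-diagonal matrix and $\mathcal{T}\in\mathbb{S}^{3n}$ has blocks $\mathcal{T}_{ij}=[\mathbf t_i]_\times-[\mathbf t_j]_\times$. Writing $S$ for the $3n\times3$ matrix whose $i$-th block is $[\mathbf t_i]_\times$ and $J=\mathbf 1_n\otimes I_3$, a short computation gives $\mathcal{T}=SJ^{T}+JS^{T}=[S\ J]\left(\begin{smallmatrix}0&I_3\\ I_3&0\end{smallmatrix}\right)[S\ J]^{T}$. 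Since the middle matrix $D:=\left(\begin{smallmatrix}0&I_3\\ I_3&0\end{smallmatrix}\right)$ has inertia $(3,3)$, Sylvester's law of inertia (applied through the congruence by $A$) immediately shows that both the rank and the signature of $F$ are controlled by the column rank of $[S\ J]$.

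For the forward direction I would show that $[S\ J]$ has full column rank $6$ exactly when the centers are not collinear: a kernel relation $S\mathbf a+J\mathbf b=0$ forces $\mathbf t_i\times\mathbf a$ to be independent of $i$, i.e. all differences $\mathbf t_i-\mathbf t_j$ are parallel to $\mathbf a$. Hence non-collinearity gives $\mathrm{rank}(F)=6$ with inertia $(3,3)$, which is Condition 1. The same kernel computation, applied to the $i$-th block row $\mathcal{T}_i$ and transported through the invertible factors $V_i$ and $A^{T}$, gives $\mathrm{rank}(F_i)=\mathrm{rank}(\mathcal{T}_i)=3$, which is Condition 2.

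The converse is the substantive direction. The inertia $(3,3)$ and rank $6$ let me factor the given $F$ as $F=[B\ C]\,D\,[B\ C]^{T}=BC^{T}+CB^{T}$ with $M:=[B\ C]$ of full column rank $6$; the only remaining freedom is the gauge $M\mapsto MG$ with $G$ in the structure group $O(D)\cong O(3,3)$. I would then extract two facts from the block structure. First, $F_{ii}=B_iC_i^{T}+C_iB_i^{T}=0$ says precisely that the Gram matrix $[B_i\ C_i]\,D\,[B_i\ C_i]^{T}$ vanishes, so each row space $W_i:=\mathrm{rowspace}([B_i\ C_i])$ is a $D$-isotropic subspace of $\mathbb{R}^6$; second, Condition 2 forces $[B_i\ C_i]$ to have full row rank $3$, so each $W_i$ is a genuine $3$-dimensional (hence maximal) isotropic. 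The goal is to find a single $G\in O(3,3)$ after which every trailing block $C_i$ is invertible; once this holds I set $V_i:=C_i$ and $T_i:=C_i^{-1}B_i$, which is skew because $B_iC_i^{T}$ is and $C_i$ is invertible, write $T_i=[\mathbf t_i]_\times$, and verify directly that $F_{ij}=C_i(T_i-T_j)C_j^{T}=V_i([\mathbf t_i]_\times-[\mathbf t_j]_\times)V_j^{T}$, the desired consistency; the recovered centers are automatically non-collinear because $\mathrm{rank}(F)=6$ forces the matrix $[S\ J]$ above to have full rank.

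The hard part is exactly this simultaneous gauge-fixing. For a fixed $i$, invertibility of $C_i$ is the statement that $W_iG$ is transverse to the Lagrangian $\langle e_1,e_2,e_3\rangle$; since each $W_i$ is itself Lagrangian, transitivity of $O(3,3)$ on maximal isotropics produces, for each $i$ separately, some $G$ sending $W_i$ to a Lagrangian transverse to $\langle e_1,e_2,e_3\rangle$, so the polynomial map $G\mapsto\det C_i(G)$ is not identically zero on the group. The set of $G$ making some $C_i$ singular is then a finite union of proper subvarieties of $O(3,3)$, and any $G$ outside this union fixes all cameras at once. I expect the delicate points to be establishing that each bad locus is genuinely proper (via the transitivity statement) and that a generic $G$ simultaneously keeps the $B_i$ from collapsing, so that the extracted $T_i$ are the correct skew parts rather than degenerating.
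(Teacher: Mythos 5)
Your forward direction is correct and tight: the congruence $F=A\,[S\ J]\,D\,[S\ J]^{T}A^{T}$ with $D=\left(\begin{smallmatrix}0&I_3\\ I_3&0\end{smallmatrix}\right)$, the kernel computation showing $\mathrm{rank}([S\ J])=6$ exactly for non-collinear centers, and the transport of the block-row rank through the invertible factors are all sound. (For calibration: this paper states Theorem 1 without proof, importing it from \cite{kasten2018gpsfm}, so your attempt is judged on its own merits.) In the converse, the reduction is also right: the factorization $F=BC^{T}+CB^{T}$ exists by inertia, the gauge group is $O(D)\cong O(3,3)$, the diagonal condition $F_{ii}=0$ together with Condition 2 makes each $W_i=\mathrm{rowspace}([B_i\ C_i])$ a $3$-dimensional (maximal) $D$-isotropic subspace, and once every $C_i$ is invertible the extraction $V_i:=C_i$, $T_i:=C_i^{-1}B_i$ skew, $F_{ij}=C_i(T_i-T_j)C_j^{T}$ is correct, as is your remark that $\mathrm{rank}(F)=6$ forces the recovered centers to be non-collinear. (One of your stated worries is vacuous: nothing requires $B_i$ to stay nondegenerate, since $T_i$ is legitimately rank $2$ or $0$ -- a camera center may sit at the origin.)

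The genuine gap is in the gauge-fixing step, and it is precisely where you suspected trouble. In split signature $(3,3)$ the maximal isotropics form two families, and two maximal isotropics $W,W'$ lie in the same family if and only if $\dim(W\cap W')$ is \emph{odd}. The coordinate subspace $E_1=\langle e_1,e_2,e_3\rangle$ is itself maximal isotropic, so whenever $G$ places the transformed $W_i$ in the same family as $E_1$, parity forces $\dim(W_iG\cap E_1)\geq 1$ and hence $\det C_i(G)\equiv 0$ on entire connected components of $O(3,3)$. Your claim that the bad locus is a finite union of \emph{proper} subvarieties of the group is therefore false as stated. Worse, your argument never invokes the standing hypothesis $\mathrm{rank}(F_{ij})=2$ from Definition 1, and without it the simultaneous gauge can genuinely fail to exist: since $W_j$ is self-$D$-orthogonal, one computes
\begin{equation*}
\mathrm{rank}(F_{ij})=\mathrm{rank}\bigl([B_i\ C_i]\,D\,[B_j\ C_j]^{T}\bigr)=3-\dim(W_i\cap W_j),
\end{equation*}
so if two of the $W_i$ lay in opposite families (even intersection, i.e.\ a rank-$3$ or rank-$1$ block), every $G$ would leave one of them in $E_1$'s family and no choice could make both $C_i,C_j$ invertible. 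The repair is exactly this computation: $\mathrm{rank}(F_{ij})=2$ forces $\dim(W_i\cap W_j)=1$ for all $i\neq j$, hence all $W_i$ lie in a \emph{single} family; choose $G_0$ carrying that family to the one opposite $E_1$ (such $G_0$ exists since $O(3,3)$ swaps the families), and on the components of $O(3,3)$ through $G_0$ each $\det C_i$ is not identically zero (transversality to $E_1$ is now parity-admissible and attainable by Witt transitivity). A real-analytic function not identically zero on a connected group component vanishes on a set with empty interior, so a finite union of the bad loci cannot exhaust the component, and a common good $G$ exists. With that insertion your proof closes; without it, the final step does not.
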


\begin{theorem}\label{thm:necessary_essential} Let $E$ be a consistent $n$-view essential matrix, associated with rotation matrices $\{R_i\}_{i=1}^n$ and camera centers $\{{\bf t}_i\}_{i=1}^n$.  $E$ satisfies the following conditions
\begin{enumerate}
\item $E$ can be formulated as $E = A + A^T$ where $A=UV^T$ and $U, V \in \Real^{3n \times 3}$
\begin{align}  \label{eq:uv}
V =\left[\begin{array}{ccc}
R_{1}^T\\
\vdots\\
R_{n}^T
\end{array}\right] ~~ & ~~
U =\left[\begin{array}{c}
R_{1}^T T_{1}\\
\vdots\\
R_{n}^T T_{n}
\end{array}\right]
\end{align}
with $T_i = [{\bf t}_i]_{\times}$ and $\sum_{i=1}^n {\bf t}_i = 0$.
\item Each column of $U$ is orthogonal to each column of $V$, i.e., $V^T U=0_{3 \times 3}$

\item rank(V)=3 

\item If not all $\{{\bf t}_i\}_{i=1}^n$ are collinear, then rank(U) and rank(A) = 3. Moreover, if the (thin) SVD of $A$ is  $A=\hat{U}\Sigma\hat{V^{T}}$, with  $\hat {U}, \hat {V} \in \Real^{3n \times 3}$ and $\Sigma \in \Real^{3 \times 3}$ then
 the (thin) SVD of $E$ is $$E = \left[\hat{U},\hat{V}\right]\left(\begin{array}{cc}
\Sigma\\
 & \Sigma
\end{array}\right)\left[\begin{array}{c}
\hat{V}^{T}\\
\hat{U}^{T}
\end{array}\right]$$
implying  rank(E) = 6.
\end{enumerate}
\end{theorem}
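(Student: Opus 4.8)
The plan is to verify the four conditions in order, relying almost entirely on direct block computations. First I would exploit a gauge freedom: since each block $E_{ij}=R_i^T(T_i-T_j)R_j$ depends only on the difference $T_i-T_j$, it is unchanged if every center is shifted by a common vector, $\mathbf{t}_i\mapsto\mathbf{t}_i+\mathbf{c}$. Choosing $\mathbf{c}=-\tfrac{1}{n}\sum_i\mathbf{t}_i$ normalizes the centers so that $\sum_i\mathbf{t}_i=0$ while leaving $E$ and all the $R_i$ intact, which supplies the normalization demanded in condition 1. With $U,V$ as in~(\ref{eq:uv}), the $(i,j)$ block of $A=UV^T$ is $R_i^TT_iR_j$, and since $T_j^T=-T_j$ the $(i,j)$ block of $A^T$ is $-R_i^TT_jR_j$; adding them recovers $R_i^T(T_i-T_j)R_j=E_{ij}$, proving $E=A+A^T$ and hence condition 1.

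Conditions 2 and 3 then reduce to one-line Gram-matrix computations. Using $R_iR_i^T=I$ and the linearity of the cross-product map, $V^TU=\sum_i R_iR_i^TT_i=\sum_i[\mathbf{t}_i]_\times=[\sum_i\mathbf{t}_i]_\times=0$, where the last equality is exactly the normalization from the previous step; this is condition 2. Likewise $V^TV=\sum_i R_iR_i^T=nI$ is invertible, so $V$ has full column rank, giving condition 3.

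The heart of the argument is condition 4. To show $\mathrm{rank}(U)=3$ I would again pass to the Gram matrix: using $T_i^T=-T_i$ and the identity $[\mathbf{a}]_\times^2=\mathbf{a}\mathbf{a}^T-\norm{\mathbf{a}}^2 I$, one finds $U^TU=-\sum_i T_i^2=\mathrm{tr}(M)\,I-M$, where $M=\sum_i\mathbf{t}_i\mathbf{t}_i^T$ is the scatter matrix of the (centered) centers. If $\lambda_1\ge\lambda_2\ge\lambda_3\ge0$ are the eigenvalues of $M$, then those of $U^TU$ are the pairwise sums $\lambda_2+\lambda_3$, $\lambda_1+\lambda_3$, $\lambda_1+\lambda_2$, which are all strictly positive precisely when at most one $\lambda_k$ vanishes, i.e.\ when $\mathrm{rank}(M)\ge2$. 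Since $\mathrm{rank}(M)\ge2$ is equivalent to the $\mathbf{t}_i$ not all lying on a single line, the ``not collinear'' hypothesis yields $\mathrm{rank}(U)=3$. Because $U$ has full column rank, left-multiplication by $U$ preserves rank, so $\mathrm{rank}(A)=\mathrm{rank}(UV^T)=\mathrm{rank}(V^T)=3$ as well.

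Finally, for the claimed SVD I would start from the thin SVD $A=\hat U\Sigma\hat V^T$ and expand the proposed factorization: its value is $\hat U\Sigma\hat V^T+\hat V\Sigma\hat U^T=A+A^T=E$, so it is at least a valid factorization. To certify it as an SVD it remains to check that the stacked outer factor $[\hat U,\hat V]$ (and likewise $[\hat V,\hat U]$) has orthonormal columns. The blocks $\hat U$ and $\hat V$ are individually orthonormal by construction, so the only nontrivial requirement is $\hat U^T\hat V=0$. Here I would identify the column spaces $\mathrm{col}(\hat U)=\mathrm{col}(A)=\mathrm{col}(U)$ and $\mathrm{col}(\hat V)=\mathrm{col}(A^T)=\mathrm{col}(V)$, so that $\hat U^T\hat V=0$ follows from $\mathrm{col}(U)\perp\mathrm{col}(V)$, which is nothing but condition 2. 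This is the main conceptual obstacle: recognizing that the orthogonality $V^TU=0$ proved earlier is precisely what forces the left singular spaces of $A$ and $A^T$ to be mutually orthogonal, turning the block expression into a genuine SVD. The central block $\mathrm{diag}(\Sigma,\Sigma)$ then carries six positive singular values, so $\mathrm{rank}(E)=6$, completing condition 4.
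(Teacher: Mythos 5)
Your proof is correct and takes essentially the same route as the paper's (given in the supplementary material for the scaled generalization): centering the $\mathbf{t}_i$ by the gauge freedom, block-expanding $A+A^T$, the same one-line Gram computations $V^TU=[\sum_i\mathbf{t}_i]_\times=0$ and $V^TV=nI$ for conditions 2 and 3, and certifying the block factorization as a genuine SVD of $E$ by identifying $\mathrm{col}(\hat U)=\mathrm{col}(U)$, $\mathrm{col}(\hat V)=\mathrm{col}(V)$ and invoking $V^TU=0$. The only local variation is your $\mathrm{rank}(U)=3$ step, where you compute $U^TU=\mathrm{tr}(M)I-M$ with $M=\sum_i\mathbf{t}_i\mathbf{t}_i^T$ and read off positivity of the spectrum from $\mathrm{rank}(M)\ge 2$, whereas the paper argues by contradiction that a null vector $\mathbf{t}$ of $U$ would force every $\mathbf{t}_i$ to satisfy $\mathbf{t}_i\times\mathbf{t}=0$ and hence collinearity (through the centroid) --- both arguments are valid, yours being slightly more quantitative and the paper's more elementary.
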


\subsection{Main theoretical results}
In this section   we derive    and prove necessary and sufficient conditions for the consistency of $n$-view essential matrices in terms of their spectral decomposition. These conditions, in turn, will be  used  later to formulate a constrained optimization problem and to extract the motion parameters from a consistent $n$-view essential matrix $E$ . 
\begin{theorem}\label{consistencyThm} Let $E \in \mathbb{S}^{3n}$ be a consistent $n$-view fundamental matrix with a set of $n$ cameras  whose centers are not all collinear. We denote by $\Sigma_+ ,\Sigma_- \in \mathbb{R}^{3\times3}$ the diagonal matrices with   the 3 positive and 3 negative eigenvalues of $E$, respectively. Then, the following conditions are equivalent:
\begin{enumerate}
\item $E$ is a consistent $n$-view essential matrix
\item The (thin) SVD of $E$   can be written in the form  $$E=\left[ {\hat{U}},{\hat{V}}\right]\left(\begin{array}{cc}
\Sigma_{+}\\
 & \Sigma_{+}
\end{array}\right)\left[\begin{array}{c}
 {\hat{V}}^{T}\\
 {\hat{U}}^{T}
\end{array}\right]$$
with $\hat{U}, \hat{V} \in \Real^{3n \times 3}$ such that each $3 \times 3$ block of  $\hat{V}$, $\hat{V}_i$, $i=1, ..., n$, is an $\sqrt{n}$-scaled  rotation matrix, i.e., ${\hat V_i}=\frac{1}{\sqrt{n}} {\hat R}_i $, where ${\hat R}_i \in SO(3) $.  
We say that $\hat{V}$ is a block rotation matrix.

\item $\Sigma_+ = - \Sigma_-$ and the  (thin) spectral decomposition of $E$ is of the form 
\[ E=[X, Y]\left(\begin{array}{cc}
\Sigma_{+}\\
 & \Sigma_{-}
\end{array}\right)\left[\begin{array}{c}
 X^{T}\\
 Y^{T}
\end{array}\right]\]
such that $\sqrt{0.5}(X+Y)$ is a block rotation matrix.
\end{enumerate}
\end{theorem}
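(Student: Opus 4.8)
Since $E$ is assumed to be a consistent $n$-view fundamental matrix with non-collinear centers, Theorem~\ref{thm:consistent_direct} guarantees $\mathrm{rank}(E)=6$ with exactly three positive and three negative eigenvalues, so $\Sigma_+$ and $\Sigma_-$ are well defined. I would prove the two equivalences $2\Leftrightarrow 3$ and $1\Leftrightarrow 2$ separately. The equivalence $2\Leftrightarrow 3$ is a purely linear-algebraic change of basis between the SVD and the symmetric eigendecomposition, whereas $1\Leftrightarrow 2$ relies on Theorem~\ref{thm:necessary_essential} together with one key observation about the vanishing diagonal blocks $E_{ii}=0$.

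\textbf{Proving $2\Leftrightarrow 3$.} Writing condition~2 as $E=\hat U\Sigma_+\hat V^T+\hat V\Sigma_+\hat U^T$, the thin-SVD requirement gives $\hat U^T\hat U=\hat V^T\hat V=I$ and $\hat U^T\hat V=0$. A direct computation then yields $E\hat U=\hat V\Sigma_+$ and $E\hat V=\hat U\Sigma_+$, so the columns of $X:=\sqrt{0.5}(\hat U+\hat V)$ and $Y:=\sqrt{0.5}(\hat V-\hat U)$ are orthonormal eigenvectors of $E$ with eigenvalues $\Sigma_+$ and $-\Sigma_+$ respectively. This forces $\Sigma_-=-\Sigma_+$ and gives the spectral form of condition~3 with $\sqrt{0.5}(X+Y)=\hat V$, which is a block rotation matrix. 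Conversely, starting from condition~3 I would set $\hat V:=\sqrt{0.5}(X+Y)$ and $\hat U:=\sqrt{0.5}(X-Y)$; orthonormality of $[X,Y]$ makes $[\hat U,\hat V]$ a valid pair of singular-vector blocks, and substituting $\Sigma_-=-\Sigma_+$ into $E=X\Sigma_+X^T+Y\Sigma_-Y^T$ reproduces exactly the SVD of condition~2.

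\textbf{Proving $1\Rightarrow 2$.} Assume $E$ is a consistent essential matrix. After a global translation I may assume $\sum_i{\bf t}_i=0$, so Theorem~\ref{thm:necessary_essential} applies and produces $A=UV^T$ with $V=[R_1^T;\dots;R_n^T]$, $V^TU=0$, and the SVD $E=[\hat U,\hat V]\,\mathrm{diag}(\Sigma,\Sigma)\,[\hat V^T;\hat U^T]$ whose singular values $\Sigma$ coincide with the positive eigenvalues $\Sigma_+$ (by the eigenvector computation above). Since $V^TV=\sum_iR_iR_i^T=nI$, the columns of $\tfrac{1}{\sqrt n}V$ are orthonormal and span the same space as $\hat V$, hence $\hat V=\tfrac{1}{\sqrt n}VQ$ for some orthogonal $Q$, giving blocks $\hat V_i=\tfrac{1}{\sqrt n}R_i^TQ$. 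These are $\sqrt n$-scaled rotations iff $\det Q=1$; if $\det Q=-1$ I would replace $(\hat U,\hat V)$ by $(\hat UD,\hat VD)$ with $D=\mathrm{diag}(1,1,-1)$, which leaves the SVD form invariant (since $D\Sigma_+D=\Sigma_+$) while turning $Q$ into an element of $SO(3)$. This makes $\hat V$ a block rotation matrix, establishing condition~2.

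\textbf{Proving $2\Rightarrow 1$ (the main obstacle).} This is the heart of the theorem. Given condition~2 I would read off the blocks $E_{ij}=\hat U_i\Sigma_+\hat V_j^T+\hat V_i\Sigma_+\hat U_j^T$ and set $R_i^T:=\sqrt n\,\hat V_i=\hat R_i\in SO(3)$ together with $T_i:=\hat V_i^T\hat U_i\Sigma_+$. The task reduces to showing that each $T_i$ is genuinely skew-symmetric, so that $T_i=[{\bf t}_i]_\times$ for some ${\bf t}_i$; this is precisely where the essential (as opposed to merely fundamental) structure must enter. The key observation is that the vanishing diagonal block $E_{ii}=\hat U_i\Sigma_+\hat V_i^T+(\hat U_i\Sigma_+\hat V_i^T)^T=0$ forces $\hat U_i\Sigma_+\hat V_i^T$ to be skew-symmetric; conjugating this skew matrix by the rotation $\hat R_i$ preserves skew-symmetry and yields $\hat R_i^T\hat U_i\Sigma_+\propto T_i$, so $T_i$ is skew. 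With $R_i$ and $T_i$ in hand, a short computation using $\hat R_i\hat R_i^T=I$ gives $R_i^TT_iR_j=\hat U_i\Sigma_+\hat V_j^T$, and transposing the analogous identity for the $(j,i)$ block yields $-R_i^TT_jR_j=\hat V_i\Sigma_+\hat U_j^T$; summing recovers $E_{ij}=R_i^T(T_i-T_j)R_j$, so $E$ is consistent in the sense of Definition~\ref{def:consistent_mv_essential}. I expect the skew-symmetry step to be the crux, since it is the only place that distinguishes essential matrices from general fundamental matrices, and it is exactly what the block-rotation hypothesis on $\hat V$ is ultimately there to support.
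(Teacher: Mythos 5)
Your proposal is correct, and its overall skeleton matches the paper's: both routes prove $1\Leftrightarrow 2$ by way of Theorem~\ref{thm:necessary_essential} and prove $2\Leftrightarrow 3$ through the SVD/spectral correspondence (the paper isolates the latter as Lemma~\ref{SVDtoSPEC} and proves it by expanding $\hat U\Sigma\hat V^T+\hat V\Sigma\hat U^T=X\Sigma X^T-Y\Sigma Y^T$; your derivation via the eigen-equations $E\hat U=\hat V\Sigma_+$, $E\hat V=\hat U\Sigma_+$ is an equivalent computation). Two of your local steps, however, genuinely diverge from the paper, both defensibly. In $(1)\Rightarrow(2)$, the paper constructs a spectral decomposition of $A^TA$ from $U^TU=QDQ^T$ and then appeals to ``uniqueness of the eigenvector decomposition'' to obtain $\frac{1}{\sqrt n}VQ=\hat VH$ with $H\in SO(3)$ --- an appeal that is delicate if $\Sigma^2$ has repeated eigenvalues; your argument uses only that $\frac{1}{\sqrt n}V$ and $\hat V$ are column-orthonormal with the same column space (both equal the column space of $A^T$, since $A=UV^T$ with $\mathrm{rank}(U)=\mathrm{rank}(V)=3$), forcing $\hat V=\frac{1}{\sqrt n}VQ$ with $Q$ orthogonal, and your repair of the $\det Q=-1$ case by $(\hat U,\hat V)\mapsto(\hat UD,\hat VD)$ with $D=\mathrm{diag}(1,1,-1)$ is legitimate because $D\Sigma_+D=\Sigma_+$ and column orthonormality is preserved (the paper instead fixes the sign by replacing $Q$ with $-Q$ in the decomposition of $U^TU$); your version sidesteps the multiplicity subtlety. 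In $(2)\Rightarrow(1)$, the paper invokes Lemma~\ref{lemma:skew} (whose rank-2/rank-3 hypotheses it does not explicitly verify) to turn skewness of $\bar U_i\hat V_i^T$ into $\bar U_i=\hat V_i\hat T_i$; you instead exploit the block-rotation structure directly: conjugating the skew matrix $\hat U_i\Sigma_+\hat V_i^T$ by $\hat R_i=\sqrt n\,\hat V_i$ preserves skewness and shows $T_i=\hat V_i^T\hat U_i\Sigma_+$ is skew, which is more elementary and hypothesis-free, though it succeeds only because $\hat V_i$ is a scaled rotation (the general lemma is what the uncalibrated setting of \cite{kasten2018gpsfm} requires). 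Your concluding identities $R_i^TT_iR_j=\hat U_i\Sigma_+\hat V_j^T$ and $-R_i^TT_jR_j=\hat V_i\Sigma_+\hat U_j^T$, and the resulting $E_{ij}=R_i^T(T_i-T_j)R_j$, agree (up to the harmless $1/n$ rescaling of the $T_i$) with the paper's direct substitution of $\bar U_i=\hat V_i\hat T_i$ into $E_{ij}=\bar U_i\hat V_j^T+\hat V_i\bar U_j^T$.
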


\begin{proof}

\textit{(1)$\Rightarrow$(2)}  Assume that $E$ is a consistent $n$-view essential matrix. Then, according to Thm.~\ref{thm:necessary_essential}, $E=A+A^T$ with $A=UV^T$ and $U, V \in \Real^{3n \times 3}$ which take the formas in \eqref{eq:uv}.
Since $A=UV^T$ and $rank(A)=3$, then $A^TA=VU^TUV$ and $A^TA \succeq 0$ with $rank(A^TA)=3$ ($A$ and $A^TA$ share the same null space). First, we construct  a spectral decomposition to $A^TA$, relying on the special properties of $U$ and $V$. We have $rank(U)=3$, and therefore $U^TU$, which is a  $3 \times 3$, symmetric positive semi-definite matrix, is of full rank. Its spectral decomposition is of the form $U^TU = Q D Q^T$, where $Q \in SO(3)$. (Spectral decomposition guarantees that $Q \in O(3)$. However, $Q$ can be replaced by $-Q$ if $\det(Q)=-1$.) $D \in \Real^{3 \times 3}$ is a diagonal matrix consisting of the (positive) eigenvalues  of $U^TU$. This spectral decomposition yields the following decomposition 
\begin{equation}\label{eq:decomposition}
A^T A = VQDQ^TV^T.
\end{equation}
Now, note that
\begin{align*}
Q^TV^T VQ & = Q^T \begin{bmatrix}R_1, ..., R_n \end{bmatrix} \begin{bmatrix}R_1^T \\ \vdots \\ R_n^T \end{bmatrix} Q 
=n I_{3 \times 3}. 
\end{align*}
By a simple manipulation \eqref{eq:decomposition} becomes a spectral decomposition  \begin{equation}\label{eq:spectral1}
 A^TA= \left(\frac{1}{\sqrt{n}}V\right) Q(nD)Q^T \left(\frac{1}{\sqrt{n}}V^T\right).\end{equation}
On the other hand, the (thin) SVD of $A$ is of the form $A ={\hat U} \Sigma {\hat V}^T$, where ${\hat U}, {\hat V} \in \Real^{3n \times 3}$, $\Sigma \in \Real^{3 \times 3}$. This means that 
\begin{equation}\label{eq:spectral2}
A^TA = {\hat V} \Sigma^2 {\hat V}^T.
\end{equation} 
Due to the uniqueness of the eigenvector decomposition, \eqref{eq:spectral1} and \eqref{eq:spectral2} collapse to the same eigenvector decomposition, up to some global rotation,  $H \in SO(3)$, that is  $\frac{1}{\sqrt n} VQ= {\hat V} H$, which means that ${\hat V}_i = \frac{1}{\sqrt{n}} R_i^TQ H^T$. Since $R_i^T, Q, H^{T} \in SO(3)$, then ${\hat R}_i=:R_i^T Q H^{T} \in SO(3)$, showing that ${\hat V}$ is a block rotation matrix. Finally, by Thm.~\ref{thm:necessary_essential}, the (thin) SVD of $E$ is of the form 

\begin{equation}\label{eq:E_SVD}
E = \left[\hat{U},\hat{V}\right]\left(\begin{array}{cc}
\Sigma\\
 & \Sigma
\end{array}\right)\left[\begin{array}{c}
\hat{V}^{T}\\
\hat{U}^{T}
\end{array}\right]
\end{equation}
and according to Lemma \ref{SVDtoSPEC}, 
the eigenvalues of $E$ are $\Sigma$ and $-\Sigma$. Since the elements on the diagonal of $\Sigma$ are positive, and $E$ is symmetric with exactly 3 positive eigenvalues $\Sigma_+$ and 3 negative eigenvalues $\Sigma_-$,  it follows that $\Sigma = \Sigma_+$ and $-\Sigma = \Sigma_-$ concluding the proof.

\textit{(2)$\Rightarrow$(1)} Let $E$ be a consistent $n$-view fundamental matrix that satisfies condition (2). We would like to show that $E$  is a consistent $n$-view essential matrix. By condition (2) $E$ can be written as
\begin{equation}\label{eq:E_hat}
E={\hat{U}}\Sigma_+{\hat{V}^{T}}+\hat{V}\Sigma_+{\hat{U}^{T}}=\bar{{U}}\hat{V}^T+\hat{V}\bar{U^{T}},
\end{equation}
where ${\bar U}={\hat U} \Sigma_+$ with  ${\hat V}_i = \frac{1}{\sqrt n} {\hat R}_i$, ${\hat R}_i \in SO(3)$. By definition  $E_{ii}=0$, and this implies that $\bar{U}_i\hat{V}_i^T$ is skew symmetric. Using  Lemma \ref{lemma:skew},  $\bar{U}_i=\hat{V}_i {\hat T}_i $ for some skew symmetric matrix ${\hat T}_i = [{\hat {\bf t}}_i]_{\times}$. Plugging ${\bar U}_i$ and ${\hat V}_i$ in \eqref{eq:E_hat} yields 
\begin{align*}
E_{ij}&=\bar{U}_i\hat{V}_j^T+\hat{V}_i\bar{U}_j^T \\
      &= \frac{1}{n}{\hat R}_i {\hat T}_i {\hat R}_j^T - \frac{1}{n}{\hat R}_i {\hat T}_j {\hat R}_j^T \\
&= {R_i}^T([{\bf t}_i]_{\times} - [{\bf t}_j]_{\times})R_j, 
\end{align*}
where $R_i = {\hat R}_i^T$ and  ${\bf t}_i = \frac{1}{n} {\hat{\bf t}}_i$, concluding the proof.

\textit{(2)$\Rightarrow$(3)} Let $E$ be an $n$-view fundamental matrix which satisfies condition (2). This means that the (thin) SVD\ of $E$ can be written in the form $E=\left[ {\hat{U}},{\hat{V}}\right]\left(\begin{array}{cc}
\Sigma_{+}\\
 & \Sigma_{+}
\end{array}\right)\left[\begin{array}{c}
 {\hat{V}}^{T}\\
 {\hat{U}}^{T}
\end{array}\right]$, where $\hat{V}$ is a block rotation matrix. Then, by Lemma \ref{SVDtoSPEC}, the (thin) spectral decomposition of $E$ is $E=[X, Y]\left(\begin{array}{cc}
\Sigma_{+}\\
 & -\Sigma_{+}
\end{array}\right)\left[\begin{array}{c}
 X^{T}\\
 Y^{T}
\end{array}\right]$, where $X, Y$ are the eigenvectors of $E$ satisfying \ $X = \sqrt{0.5}({\hat U}+{\hat V}) $ and $Y=\sqrt{0.5}({\hat V}-{\hat U})$. Since ${\hat V} = \sqrt{0.5} (X+Y)$, and by condition (2) ${\hat V}$  is a block rotation matrix, the claim is confirmed and also $\Sigma_- = - \Sigma_+$.

\textit{(3)$\Rightarrow$(2)} Let $E$ be a consistent $n$-view fundamental matrix satisfying condition (3), i.e.,  its  (thin) spectral decomposition is of the form $E=\left[ {X},{Y}\right]\left(\begin{array}{cc}
\Sigma_{+}\\
 & \Sigma_{-}
\end{array}\right)\left[\begin{array}{c}
 {X}^{T}\\
 {Y}^{T}
\end{array}\right]$, where $\sqrt{0.5}(X+Y)$ is a block rotation matrix. Since $\Sigma_+ = -\Sigma_-$, we can use  Lemma \ref{SVDtoSPEC}, which implies   that the (thin) SVD is of the form $E=\left[ {\hat{U}},{\hat{V}}\right]\left(\begin{array}{cc}
\Sigma_{+}\\
 & \Sigma_{+}
\end{array}\right)\left[\begin{array}{c}
 {\hat{V}}^{T}\\
 {\hat{U}}^{T}
\end{array}\right]$,
where ${\hat V} = \sqrt{0.5}(X+Y)$, concluding the proof.
\end{proof}

\begin{corollary} \label{EucRecon}
Euclidean reconstruction. Let $E$ be  a
consistent $n$-view essential matrix with 6 distinct eigenvalues, then it is  possible to explicitly determine $R_1 \dots R_n$ and ${\bf t}_1,\ldots,{\bf t}_n$ that are consistent with respect to $E.$

\end{corollary}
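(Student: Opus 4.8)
The plan is to turn the existence argument of Theorem~\ref{consistencyThm} into an explicit algorithm, running its (3)$\Rightarrow$(2) and (2)$\Rightarrow$(1) constructions ``backwards'' and using the distinct-eigenvalue hypothesis to pin down the otherwise ambiguous eigenvectors. First I would compute the symmetric spectral decomposition of $E$, obtaining its six eigenvalues together with an orthonormal eigenbasis. Since $E$ is a consistent $n$-view essential matrix, condition (3) of Theorem~\ref{consistencyThm} guarantees that the eigenvalues split as $\pm\sigma_1,\pm\sigma_2,\pm\sigma_3$ with $\Sigma_+=-\Sigma_-$. The assumption that the six eigenvalues are distinct is equivalent to the three magnitudes $\sigma_1,\sigma_2,\sigma_3$ being distinct (the $\sigma_k$ are positive, so no positive eigenvalue can coincide with a negative one). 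Hence each positive eigenvalue $\sigma_k$ is paired \emph{canonically} with the negative eigenvalue $-\sigma_k$ of equal magnitude, and each eigenvector is determined up to sign. This lets me assemble $X=[x_1,x_2,x_3]$ from the unit eigenvectors for $\Sigma_+$ and $Y=[y_1,y_2,y_3]$ from those for $\Sigma_-$, in the matched order.

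Next, following the (3)$\Rightarrow$(2) step, I set $\hat V=\sqrt{0.5}\,(X+Y)$ and $\hat U=\sqrt{0.5}\,(X-Y)$. The only remaining freedom is the per-pair sign choice $(x_k,y_k)\mapsto(\pm x_k,\pm y_k)$: negating $x_k$ and $y_k$ together merely flips the sign of the $k$-th columns of $\hat U$ and $\hat V$, whereas negating only one of them interchanges (up to sign) the $k$-th columns of $\hat U$ and $\hat V$. I resolve these by requiring $\hat V$ to be a block rotation matrix; condition (2) of Theorem~\ref{consistencyThm} guarantees that such a choice exists, and since the pairing is already canonical this is a finite search over the sign/assignment choices, selecting the unique combination for which every block $\hat V_i$ equals $\tfrac{1}{\sqrt n}$ times a genuine rotation. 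I then read off $\hat R_i=\sqrt{n}\,\hat V_i\in SO(3)$ and set $R_i=\hat R_i^{T}$.

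Finally I recover the translations exactly as in the (2)$\Rightarrow$(1) argument. Writing $\bar U=\hat U\Sigma_+$, the identity $E_{ii}=0$ forces $\bar U_i\hat V_i^{T}$ to be skew-symmetric, so Lemma~\ref{lemma:skew} furnishes a skew-symmetric $\hat T_i=[\hat{\mathbf t}_i]_{\times}$ with $\bar U_i=\hat V_i\hat T_i$; explicitly $\hat T_i=\sqrt{n}\,\hat R_i^{T}\bar U_i$. Extracting $\hat{\mathbf t}_i$ from $\hat T_i$ and setting $\mathbf t_i=\tfrac1n\hat{\mathbf t}_i$ produces the cameras, and substituting back reproduces $E_{ij}=R_i^{T}([\mathbf t_i]_{\times}-[\mathbf t_j]_{\times})R_j$, so the recovered parameters are consistent with $E$. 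The main obstacle I anticipate is precisely this eigenvector disambiguation: the sign and pairing freedom of the spectral decomposition must be fixed correctly, and it is the distinct-eigenvalue hypothesis that makes the positive/negative pairing canonical and reduces the sign resolution to a well-posed finite check. Without distinct eigenvalues the eigenspaces could rotate freely within repeated magnitudes, so the block-rotation constraint would have to be imposed over a continuum of candidate bases rather than a finite set, and the explicit extraction would no longer be immediate.
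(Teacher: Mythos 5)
Your proposal is correct and follows essentially the same route as the paper's proof of Corollary~\ref{EucRecon}: spectral decomposition of $E$, a finite sign search (the paper's eight choices of $I_s$) to make $\sqrt{0.5}(X+Y)$ a block rotation, passage to the special SVD of condition (2) of Thm.~\ref{consistencyThm}, and recovery of $\hat T_i=\hat V_i^{-1}\bar U_i$ via Lemma~\ref{lemma:skew} with $R_i=\sqrt{n}\hat V_i^T$ and $\mathbf t_i=\frac{1}{n}\hat{\mathbf t}_i$. Your explicit explanation of why the distinct-eigenvalue hypothesis makes the positive/negative pairing canonical and reduces the ambiguity to a finite check is a welcome elaboration of a point the paper leaves implicit (only note that the sign combination need not be literally unique --- column sign patterns of determinant $+1$ give another valid block rotation, differing by the global rotation $H$, which is harmless since reconstruction is unique only up to similarity).
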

\begin{proof}
The claim is justified by the following construction,
which relies on the spectral characterizations   of Thm.  \ref{consistencyThm}.
\begin{enumerate}
\item Calculate the eigenvectors $X, Y$ of $E$, and  the corresponding three positive eigenvalues, $\Sigma_+$,  and three negative eigenvalues, $\Sigma_-$, respectively. 
\item  To realize condition (3) of Thm.~\ref{consistencyThm}, there are 8 possible choices of $\sqrt{0.5}(X+ YI_{s})$, where $I_s=\left(\begin{array}{ccc}
\pm1 & 0 & 0 \\
0 & \pm1 & 0 \\
0 & 0 & \pm1 \\ 
\end{array}\right)$, due to the sign ambiguity of each eigenvector. Then, $I_s$ is chosen such that $\sqrt{0.5}(X+ YI_{s})$ is  block rotation matrix up to a global sign which can be removed.  \item  This spectral decomposition determines  directly an SVD decomposition in the form of condition (2)  of Thm. \ref{consistencyThm}.  
We would like to emphasize that due to the multiplicity of singular values,   a standard SVD method which is performed directly on $E$, in general,  will not produce this special SVD pattern. 
 
\item Having the relation $E_{ij}=\hat{U}_i\Sigma_+ \hat{V}_j^T+\hat{V}_i\Sigma_+ \hat{U}_j^T$ and since $E_{ii}=0$ we get that $\hat{U}_i\Sigma_{+} \hat{V}_i^T$ is skew symmetric. We denote ${\bar U}_i={\hat U}_i \Sigma_+$ and, by Lemma  \ref{lemma:skew}, it holds that $ {\hat T}_i={\hat V}_i^{-1}{\bar U}_i$. 
\item Finally, for $i=1,2,..,n$,  define $R_i =: \sqrt{n}{\hat V}_i^T$ and  ${\bf t}_i =: \frac{1}{n} {\hat{\bf t}}_i$  and it holds that
\begin{align*}
E_{ij} & = {R_i}^T([{\bf t}_i]_{\times} - [{\bf t}_j]_{\times})R_j.
\end{align*}
 \end{enumerate}
 \end{proof}
This construction  yields $\{R_i\}_{i=1}^{i=n}$ and $\{{\bf t}_i\}_{i=1}^{i=n}$ which are consistent with respect to $E$. Moreover,  the reconstruction is unique up to a global similarity transformation. Roughly speaking, this can be proven for $n=3$ by applying an argument from \cite{holt1995uniqueness}. Next, for $n>3$,  by induction, suppose we obtain two reconstructions, $P_1,...,P_n$ and $P'_1,...,P'_n$. By the induction assumption these must include two sets of $n-1$ non-collinear cameras so that each is unique up to a similarity transformation. Such two sets overlap in at least 2 cameras, which in turn imply that the two similarity transformations must be identical. The complete proof is provided in the supplementary material.

\subsection{Supporting lemmas}

\begin{lemma}\label{lemma:skew}\cite{kasten2018gpsfm} Let $A,B\in \Real^{3\times3}$ with $rank(A)=2$, $rank(B)=3$ and $AB^{T}$ is skew symmetric, then
$T=B^{-1}A$ is skew symmetric.
\end{lemma}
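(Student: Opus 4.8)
The plan is to exploit the rank hypothesis on $B$ first: since $rank(B)=3$, the matrix $B$ is invertible, so $T=B^{-1}A$ is well defined and, equivalently, $A=BT$. The entire argument then reduces to a single congruence computation. I would begin by substituting $A=BT$ into the hypothesis matrix, obtaining $AB^{T}=BTB^{T}$. In other words, the given skew-symmetric matrix is precisely the congruence of $T$ by $B^{T}$.

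Next I would use the elementary fact that congruence by an invertible matrix both preserves and reflects skew-symmetry. Concretely, writing $S:=AB^{T}=BTB^{T}$, the hypothesis $S^{T}=-S$ expands, via $(BTB^{T})^{T}=BT^{T}B^{T}$, into the identity $BT^{T}B^{T}=-BTB^{T}$. Because $B$ is invertible I may left-multiply by $B^{-1}$ and right-multiply by $B^{-T}$; both copies of $B$ then cancel on each side, leaving $T^{T}=-T$. This is exactly the statement that $T$ is skew-symmetric, which concludes the argument.

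I do not expect a genuine obstacle here: the only thing that needs care is the bookkeeping of transposes in the congruence $BTB^{T}$ and checking that $B^{-1}$ and $B^{-T}$ are applied on the correct sides so that the two factors of $B$ cancel cleanly. It is worth remarking that the hypothesis $rank(A)=2$ is not actually needed to derive skew-symmetry; since $B$ is invertible, that hypothesis only fixes $rank(T)=rank(A)=2$, guaranteeing that $T$ is a nonzero skew-symmetric matrix, i.e.\ $T=[\mathbf{t}]_{\times}$ for some $\mathbf{t}\neq 0$. This is precisely the form in which the lemma is invoked in the proof of Theorem~\ref{consistencyThm} and in Corollary~\ref{EucRecon}.
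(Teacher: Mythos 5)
Your proof is correct. The paper states Lemma~\ref{lemma:skew} without proof, deferring entirely to the citation \cite{kasten2018gpsfm}, so there is no in-paper argument to compare against; your congruence computation $AB^{T}=BTB^{T}$, followed by cancelling the invertible factors $B$ and $B^{T}$ to get $T^{T}=-T$, is the standard (and essentially the only natural) route, and your side remark that $rank(A)=2$ is needed only to pin down $rank(T)=2$, i.e.\ $T=[\mathbf{t}]_{\times}$ with $\mathbf{t}\neq 0$, is accurate and consistent with how the lemma is invoked in Theorem~\ref{consistencyThm} and Corollary~\ref{EucRecon}.
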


\begin{lemma}\label{SVDtoSPEC} Let $E\in \mathbb{S}^{3n}$ of rank(6), and $\Sigma \in \Real^{3 \times 3} $, a diagonal matrix, with positive elements on the diagonal. Let $X, Y, U, V \in \Real^{3n \times 3}$, and we define the mapping $(X,Y) \leftrightarrow (U,V):$ $X = \sqrt{0.5}({\hat U}+{\hat V})$, $Y = \sqrt{0.5}({\hat V}-{\hat U})$, ${\hat U} = \sqrt{0.5}(X-Y), {\hat V} = \sqrt{0.5}(X+Y)$. 

Then, the (thin) SVD of $E$ is of the form $$E=\left[ {\hat{U}},{\hat{V}}\right]\left(\begin{array}{cc}
\Sigma\\
 & \Sigma
\end{array}\right)\left[\begin{array}{c}
 {\hat{V}}^{T}\\
 {\hat{U}}^{T}
\end{array}\right]$$   if and only if the (thin) spectral decomposition of $E$ is 
of the form 

$$E=[X, Y]\left(\begin{array}{cc}
\Sigma\\
 & -\Sigma
\end{array}\right)\left[\begin{array}{c}
 X^{T}\\
 Y^{T}
\end{array}\right]$$ 
\end{lemma}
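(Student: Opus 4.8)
The plan is to reduce the biconditional to two elementary facts: an algebraic identity showing that both block expressions denote the \emph{same} matrix, and the observation that the stated change of variables is a block-orthogonal transformation, so that the orthonormality requirements underlying ``being a genuine SVD'' and ``being a genuine spectral decomposition'' hold simultaneously.

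First I would multiply out the two block forms. The SVD form expands to $\hat{U}\Sigma\hat{V}^{T}+\hat{V}\Sigma\hat{U}^{T}$, while the spectral form expands to $X\Sigma X^{T}-Y\Sigma Y^{T}$. Substituting $X=\sqrt{0.5}(\hat{U}+\hat{V})$ and $Y=\sqrt{0.5}(\hat{V}-\hat{U})$ into the latter, the diagonal contributions $\hat{U}\Sigma\hat{U}^{T}$ and $\hat{V}\Sigma\hat{V}^{T}$ cancel and the cross terms double, leaving exactly $\hat{U}\Sigma\hat{V}^{T}+\hat{V}\Sigma\hat{U}^{T}$. This identity holds for arbitrary $X,Y,\hat{U},\hat{V}$ obeying the stated linear relations, with no orthonormality assumed; hence whenever the variables are linked by the lemma's (mutually inverse) maps, the two decompositions always represent one and the same matrix $E$.

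Next I would record the transformation at the level of stacked factors, namely $[X,Y]=[\hat{U},\hat{V}]\,W$ with $W=\sqrt{0.5}\begin{pmatrix} I & -I \\ I & I \end{pmatrix}\in\Real^{6\times 6}$. A one-line computation gives $W^{T}W=I_{6}$, so $W$ is orthogonal (its transpose implements precisely the inverse map $\hat{U}=\sqrt{0.5}(X-Y)$, $\hat{V}=\sqrt{0.5}(X+Y)$). Consequently $[X,Y]^{T}[X,Y]=W^{T}\big([\hat{U},\hat{V}]^{T}[\hat{U},\hat{V}]\big)W$, which shows that $[X,Y]$ has orthonormal columns if and only if $[\hat{U},\hat{V}]$ does; and the latter is equivalent to $[\hat{V},\hat{U}]$ having orthonormal columns, since these differ only by a permutation of the block columns.

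Finally I would assemble the two directions. The SVD form is a genuine thin SVD of $E$ exactly when its left factor $[\hat{U},\hat{V}]$ and right factor $[\hat{V},\hat{U}]$ have orthonormal columns and the diagonal entries of $\mathrm{diag}(\Sigma,\Sigma)$ are positive, which holds by the hypothesis on $\Sigma$. The spectral form is a genuine thin eigendecomposition of the symmetric $E$ exactly when $[X,Y]$ has orthonormal columns, in which case its diagonal lists the six nonzero eigenvalues $\Sigma$ and $-\Sigma$ (three positive, three negative, accounting for all of $\mathrm{rank}(E)=6$). By the identity both forms describe the same $E$, and by the orthogonality of $W$ the orthonormality conditions are met or violated simultaneously; hence one decomposition has the prescribed form precisely when the other does. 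I do not anticipate a real obstacle: the content is a bookkeeping verification, and the only points requiring care are keeping straight the two factor-orthonormality conditions of the SVD versus the single one of the eigendecomposition, routing them all through the orthogonal $W$, and noting that the eigenvalue sign pattern $\pm\Sigma$ is forced by the positivity of $\Sigma$.
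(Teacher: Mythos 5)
Your proposal is correct and follows essentially the same route as the paper's proof: expanding both block forms to verify they denote the same matrix ($\hat{U}\Sigma\hat{V}^{T}+\hat{V}\Sigma\hat{U}^{T}=X\Sigma X^{T}-Y\Sigma Y^{T}$ under the stated substitutions) and transferring column-orthonormality between $[\hat{U},\hat{V}]$ and $[X,Y]$, which the paper does by direct computation and you package slightly more cleanly via the explicit orthogonal matrix $W$ handling both directions at once. No gaps; the argument is sound as stated.
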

\begin{proof}
The proof is provided in the supplementary material.
\end{proof}

\section{Method}
Given images $I_1,... ,I_n$, we assume a standard robust method is used to estimate the pairwise essential matrices, which we denote by $\Omega=\{\hat{E}_{ij}\}$. In practice, only a small subset of the pairwise essential matrices are estimated, due to occlusion, large viewpoint and brightness changes  as well as objects' motion, and in addition the available estimates are noisy.  
Our goal therefore is to find a consistent $n$-view essential matrix $E\in \mathbb{S}^{3n}$ that is as similar to the measured essential matrices as possible. 

To make an $n$-view essential matrix consistent, its blocks of pairwise essential matrices must each be scaled by an unknown factor. 
\cite{sengupta2017new} proposed an optimization scheme that explicitly seeks for the unknown scale factors, yielding a nonlinear, rank-constrained optimization  formulation.  
The success of this  approach critically  depends on the quality of its initialization, which in experiments was obtained by applying another, state of the art  SfM method.

 More recently, \cite{kasten2018gpsfm} proposed an analogous approach for projective SfM. They showed that  a consistent  $3$-view fundamental matrix, which uniquely determines camera matrices (up to a projective ambiguity) from a triplet of images,  is invariant to  scaling  of its pairwise fundamental matrices. This  allowed them to formulate an optimization problem that seeks $3$-view  fundamental matrices that are both consistent and compatible, while avoiding the need to explicitly  optimize for the scale factors. 

 In this paper, we introduce an optimization scheme that is analogous to that of \cite{kasten2018gpsfm}, but adapted to calibrated settings. In particular, our scheme uses the algebraic constraints derived in Thm.~\ref{consistencyThm} to enforce the consistency of noisy, and possibly partial essential matrices. Similar to \cite{kasten2018gpsfm}, our method simultaneously enforces consistency of camera triplets attached rigidly to each other, allowing us to avoid optimizing explicitly for the unknown scales of the estimated essential matrices. (To that end we further extend Thm.~\ref{consistencyThm} to handle scaled rotations for image triplets, see supplementary material for details.) Our formulation, however, is more involved than in \cite{kasten2018gpsfm} due to the additional spectral constraints required for Euclidean reconstruction.


In the rest of this section we present our constrained optimization formulation and propose   an
ADMM-based solution scheme. Subsequently, we discuss how to select minimal subsets of
triplets to speed up the optimization. Finally, we show how
the results of our optimization can be used to reconstruct the absolute orientations and positions of the
$n$ cameras.         

\subsection{Optimization}\label{sec:optimization}
In multi-view settings, it is common to define a viewing graph $G=(V,W)$, with nodes $v_1, \ldots, v_n$, corresponding to the $n$ cameras, and $w_{ij} \in W$ if $ {\hat E}_{ij} $ is one of the estimated pairwise essential matrices.
Let $\tau$ denote a collection of $m$ 3-cliques of cameras where $m\leq (^n_3).   $ The collection may be the full set of the 3-cliques in $G$, or a chosen subset as  described in Sec.\ \ref{sec:GraphConstruction}. We index the elements of $\tau$ by $k=1,...,m$, where $\tau(k)$ denote the $k^{th}$ triplet. The collection $\tau$ determines  a partial selection of measured essential matrices, $\Omega$, that  plays a role in the optimization problem, where it holds that if $\hat{E}_{ij}\in \Omega $ then $\hat{E}_{ij}^T = \hat{E}_{ji}\in \Omega. $

We define the measurements matrix $\hat{E}\in \mathbb{S}^{3n}$ to have $\hat{E}_{ij}$ as its $(i,j)^{th}$ block if $\hat{E}_{ij}\in \Omega$ and $0_{3\times 3} $ in  the rest of its blocks. In our optimization problem we look for $E$ that is as close as possible to $\hat{E }$ under the constraints that its $9\times 9 $ blocks, induced by $\{\tau (k)\}_{k=1}^m$ and denoted by  $\{E_{\tau (k)}\}_{k=1}^m$, are consistent 3-view essential matrices. In general, such $E$ is inconsistent and incomplete, but  as we explain in Sec. \ref{subsec:retrieval}  it is possible to retrieve a set of $n$ absolute rotations and translations that is compatible with its essential matrices up to scale, which in turn implies that  the completion of the missing entries is  consistent. 

We formulate our constrained optimization as follows  
\begin{align} \label{eq:oprimization_total_objective}
& \underset{E\in{\cal E}}{\text{minimize}}
& & \sum_{ k= 1}^m ||E_{\tau(k)}-\hat E_{\tau(k)}||^2_F   \\
& \text{subject to} & &  rank(E_{\tau(k)})=6   \nonumber\\  
& & & \Sigma _+(E_{\tau(k)})=-\Sigma _-(E_{\tau(k)})  \nonumber \\& & & X(E_{\tau(k)})+Y(E_{\tau(k)})\text{ is block rotation,}    \nonumber
\end{align}
with $i=1, ..., n$ and $k=1, ..., m$, where $\Sigma_+(E_{\tau(k)}), \Sigma _-(E_{\tau(k)}) \in \Real^{3}$ denote the 3 largest (descending order) and $3$ smallest (ascending order) eigenvalues  of $E{_{\tau(k)}}$ respectively and  $X(E_{\tau(k)}) \in \Real^{9 \times 3}$ and $Y(E_{\tau(k)}) \in \Real^{9 \times 3}$ are their corresponding eigenvectors.

Solving \eqref{eq:oprimization_total_objective} is challenging due to its rank  and spectral  decomposition  constraints. We solve  this optimization problem using ADMM. To that end, as part of the  ADMM method \cite{boyd2011distributed}  $4m$ auxiliary matrix variables of size $9\times9$ are added: $2m$  variables duplicating $\{E_{\tau (k)}\}_{k=1}^{m}$, denoted $B = \{B_k\}_{k=1}^{m}$ and $D = \{D_k\}_{k=1}^{m}, $ as well as $2m$   Lagrange multipliers, $\Gamma = \{\Gamma_k\}_{k=1}^{m}$ and $\Phi = \{\Phi_k\}_{k=1}^{m}$. 
This yields the following constrained optimization problem
\begin{align} \label{eq:oprimization_total_objective_ADMM}
& \underset{\Gamma,\Phi}{\text{max}}~~ \underset{E,B,D}{\text{min}}
& & \sum_{ k= 1}^m L(E_{\tau{(k)}},B_k,\Gamma_k,D_k,\Phi_k)   \\
& \text{subject to}& & E\in{\cal E} & \nonumber \\ 
& & &  rank(B_{k})=rank(D_{k})=6   \nonumber\\  
& & & \Sigma _+(B_k)=-\Sigma_-(B_k)   \nonumber
\\& & & X(D_k)+Y(D_k)~\text{is block rotation}\nonumber
\end{align}
with $i=1, ..., n$ and $k=1, ..., m$, 
where 
\begin{align*}
L(E_{\tau{(k)}},B_k,\Gamma_k,D_k,\Phi_k)= \norm{E_{\tau{(k)}}-\hat E_{\tau{(k)}}}^2_F+ \\ \frac {\alpha_1}{2} \norm{B_k-E_{\tau{(k)}}+\Gamma_k}_F^2+ \frac{\alpha_2}{2} \norm{D_k-E_{\tau{(k)}}+\Phi_k}^2_F. \nonumber
\end{align*}
We initialize the auxiliary variables   at $t=0$ with
 \begin{align}
 B_k^{(0)}={\hat E}_{\tau{(k)}},D_k^{(0)}={\hat E}_{\tau{(k)}},\Gamma_k^{(0)}=0,\Phi_k^{(0)}=0. \nonumber
 \end{align}
Then, we solve the optimization problem iteratively by alternating between the following four steps.

\noindent\textbf{(i) \uline{Solving for $E$}}.
\begin{align}
&E^{(t)}= \underset{E}{\text{argmin}}
& & \sum_{k=1 }^m \norm{E_{\tau{(k)}}-\hat E_{\tau{(k)}}}^2_F \\
& & & + \frac{\alpha_1}{2} \norm{B^{(t-1)}_k-E_{\tau{(k)}}+\Gamma^{(t-1)}_k}_F^2\nonumber\\
& & & + \frac{\alpha_2}{2}\norm{D^{(t-1)}_k-E_{\tau{(k)}}+\Phi^{(t-1)}_k}_F^2\nonumber\\
& \text{subject to}
& & E\in{\cal E}  \nonumber 
\end{align}

This step in the optimization is solved by first neglecting the constraint of two identical singular values for each block, yielding a convex quadratic problem with a closed form solution.
Then, based on SVD, each $3\times 3$ block is modified to satisfy the constraint of two identical singular values.

\noindent\textbf{(ii) \uline{Solving for $B_{k }$}}. For all $k=1,... ,m$ \begin{align} \label{eq:oprimization_total_objective_ADMM}
& B_k^{(t)}= \underset{B_{k}}{\text{argmin}}
& & ||B_k-E_{\tau{(k)}}^{(t)}+\Gamma_k^{(t-1)}||_F^2   \\
& \text{subject to}
& &  rank(B_{k})=6 \nonumber\\  
& & & \Sigma _+(B_k)=-\Sigma _- (B_k)\nonumber   
\end{align}

The minimizer for this sub-problem is obtained in the following way. By construction, $E_{\tau{(k)}}^{(t)}-\Gamma_k^{(t-1)}$ is a symmetric matrix, and we denote its (full) spectral decomposition by  $U\Sigma U^T$, where $U \in \Real^{9 \times 9}$ and $\Sigma \in \Real^{9 \times 9}$ is a diagonal matrix in which the eigenvalues are arranged in a descending order.  
Then, the update is 
\begin{align}
B_k^{(t)}=U\Sigma ^{*}U^T,
\end{align}
where  $\Sigma ^{*} \in \Real^{9 \times 9}$ is a diagonal matrix with the entries
\begin{align}
\Sigma_{ii}^*=\begin{cases}\frac{1}{2}(\Sigma_{ii}-\Sigma_{10-ii}) & i\neq4,5,6 \\0 & i=4,5,6\end{cases}
\end{align}

\noindent{\textbf{(iii) \uline{Solving for $D_{k }$}}. For all $k=1,... ,m$ \begin{align} \label{eq:oprimization_total_objective_D}
& \underset{}{\text{}} D_k^{(t)}= \underset{D_{k}}{\text{argmin}}
& & ||D_k-E_{\tau{(k)}}^{(t)}+ \Phi_k^{(t-1)}||_F^2   \\
& \text{subject to}
& &  rank(D_{k})=6 \nonumber\\
& & & X({D_{k})}+Y({D_k})~\text {is a block rotation matrix} \nonumber \end{align}
We minimize this sub-problem by an iterative process, which we repeat until convergence.  We begin with $D_k = E_{\tau(k)}^{(t)} - \Phi_k^{(t-1)}$, which is symmetric by construction.  We apply spectral decomposition to $D_k$, and extract $X(D_k)$, $Y(D_k)$, $\Sigma_+(D_k)$ and $\Sigma_-(D_k)$. Assuming no eigenvalue multiplicities, the eigenvectors are determined uniquely up to a sign (this argument is justified in \ref{sec:experiments}). We denote by $I_s$,  a diagonal matrix of size $3 \times 3$, such that each diagonal element is either 1 or -1. There are eight configurations for $I_s$ from which we select the best, in the sense that on average  each $3 \times 3$ block, of the form, $\sqrt{0.5}[X+YI_s]_i$, $i=1,2,3$, is close to scaled rotation, using the following score,   
\begin{align*}
  &  I_s^*= \underset{I_s}{\mathrm{argmax}}
  \sum_{i=1}^3 \frac{ \norm{diag((X_i+Y_i I_s)^{T}(X_i+Y_i I_s))}_2}
  {\|(X_{i}+Y_{i}I_s)^T(X_{i}+Y_{i}I_s)\|_F}. \end{align*}
Next, let $V=\begin{bmatrix}V_1 \\V_2 \\V_3\end{bmatrix}$ be the projection of $\sqrt{0.5}(X+YI_s^{*})$ so that $V_i$ is the closest scaled rotation to $\sqrt{0.5}[X+YI_s^{*}]_i$. Projection to scaled $SO(3)$ is obtained through removal of the singular values from the SVD decomposition, setting the scale factor to the average of the singular values, and possibly negating the scale factor to make the determinant positive. Let $U=\sqrt{0.5}(X-YI_s^{*})$, and $\tilde{X} = \sqrt{0.5}(U+V)$ and ${\tilde Y}=\sqrt{0.5}(V-U)$. We then update the value of $D_k$ to be the symmetric matrix 
$$D_k=[{\tilde X}, {\tilde Y}]\left(\begin{array}{cc}
\Sigma_{+}\\
 & \Sigma_-
\end{array}\right)\left[\begin{array}{c}
 {\tilde X}^{T}\\
 {\tilde Y}^{T}
\end{array}\right]$$   
and repeat these steps until convergence. 

\noindent \textbf{(iv) \uline{Updating $\Gamma_{k },\Phi_{k}$}}. For all $k=1,\dots ,m$
\begin{align} 
\Gamma_k^{(t)}=\Gamma_k^{(t-1)}+ B_k^{(t)}-E_{\tau(k)}^{(t)}\\
\Phi_k^{(t)}=\Phi_k^{(t-1)}+D_k^{(t)}-E_{\tau(k)}^{(t)}
\end{align}

\subsection{Graph construction and outliers removal}\label{sec:GraphConstruction}

As explained above, to apply our optimization algorithm, it is required to determine a collection $\tau$ of camera triplets, which is a subset of the given camera triplets. The selection of a subset allows for better  efficiency and robustness. Similarly to \cite{klopschitz2010robust,kasten2018gpsfm}, we consider a weighted viewing graph $G$ whose weights for each edge $w_{ij}$ is assigned to be the number of  the inlier matches relating $I_i$ and $I_j$. We begin by selecting  disjoint maximal spanning trees from $G$, from which we extract an initial  subset of triplets.   We then, remove near collinear and inconsistent\ triplets. We next build a triplet graph $G_T$ whose nodes, which represent image triplets, are connected by an edge whenever two triplets share the same two cameras. Finally, we greedily remove nodes from $G_T$.  starting with the least consistent triplet (using the rotation consistency score defined below), a node is removed as long as the connectivity of $G_T$ is preserved and the total number of cameras associated  with $G_T$  does not decrease. 

To define collinear and consistency scores for each triplet we denote  the angles   in the triangle formed by three cameras  $i,j,k$  by $\theta_i,\theta_j,\theta_k$ respectively. We measure each angle using the known relative translations $\mathbf{t}_{ij},\mathbf{t}_{ik},\mathbf{t}_{jk}$,  i.e., $\cos\theta_i=(\frac{\mathbf{t}_{ij}^T \mathbf{t}_{ik}}{||\mathbf{t}_{ij}||||{\mathbf{t}_{ik}}||})$.   Then, the   collinearity score of cameras $\{i,j,k\}$ is the minimal angle in $\{\theta_i,\theta_j,\theta_k\}$. The consistency score of translations  is defined by $|\theta_i+ \theta_j+\theta_k-\pi|$ and   the consistency score of rotations by $||R_{ij}R_{jk}R_{ki}-I||$.

\subsection{Location and orientation retrieval}\label{subsec:retrieval}
 After solving \eqref{eq:oprimization_total_objective}, we extract from $E$ the collection of 3-view essential matrices $\{E_{\tau(k)}\}_{k=1}^{m}$, which, due to the optimization, are consistent w.r.t scaled rotations. Next, using  Corollary \ref{EucRecon}  with additional   block normalizing at step 3, three rotations $\{ R^{\tau(k)}_1,R^{\tau(k)}_2,R^{\tau(k)}_3\}$ and three translations $\{t^{\tau(k)}_1,t^{\tau(k)}_2,t^{\tau(k)}_3\}$ are extracted from each $E_{\tau(k)}$, which are  uniquely defined up to a similarity transformation. Now, any two triplets in $\tau$ that share two cameras $a,b$  agree on $E_{ab}$. Since the sign of  $E_{ab}$ is fixed, it determines the cameras $a,b$ up to 2 unique configurations  \cite{hartley2003multiple}. Therefore, each one of the two triplets must agree with one of the two configurations. As a result, assuming that both triplets defines the same configuration for $a$ and $b$, there is a unique similarity transformation between the two triplets. In practice, in our experiments we observe that this is always the case.

By the construction process described in Sec.~\ref{sec:GraphConstruction}, the collection of triplets  $\tau$ form a connected triplet graph. It is therefore  possible to traverse the graph and  apply a similarity transformation  on the three cameras of each new node  $\tau(k),$ and as a result bring  all the cameras to a common Euclidean frame.

\section{Experiments}

To evaluate our approach, we implemented the SfM pipeline described next and  tested it on ten challenging collections of  unordered internet photographs of various sizes from \cite{wilson2014robust}. Each dataset is provided with ground truth camera parameters. 
We use our method to recover    camera parameters and compare them to the parameters obtained with existing methods, before and after bundle adjustment (BA).
\begin{table}[t] \tiny

\resizebox{\linewidth}{!}{%
\

\begin{tabular}{|l|c|c|c|c|c|c|}

\hline
 &\multicolumn{2}{|c|}{ \textbf{ Our Method } } &    \multicolumn{2}{|c|}{ \textbf{Chatterjee et al.} \cite{chatterjee2018robust}} &\multicolumn{2}{|c|}{ \textbf{ Martinec et al. }\cite{martinec2007robust} }   \\\hline
\textbf{Data Set} &\textbf{ $R_f$} &\textbf{ $R_d$} & \textbf{$R_f$} &\textbf{ $R_d$} & \textbf{$R_f$} &\textbf{ $R_d$} \\\hline
Vienna Cathedral & \textbf{0.1141} & \textbf{4.7328} & 0.1514 & 7.8472 & - & - \\\hline
Piazza del Popolo & \textbf{0.0595} &  \textbf{2.4098}  & 0.2287 & 11.6022 & 0.5901 &  27.2359 \\\hline
NYC Library & \textbf{0.1200} & \textbf{4.8751}  & 0.1226 & 6.0765 & 0.5395 &  28.4385 \\\hline
Alamo & \textbf{0.0751} & \textbf{3.0489}  & 0.0879 & 4.4958 &  0.1503 & 7.3180 \\\hline
Metropolis & \textbf{0.30} & \textbf{15.6613}  & 0.4612 & 26.5636 & 1.1381  &  58.6143 \\\hline
Yorkminster & \textbf{0.1499} &  \textbf{6.6343} & 0.1526& 7.8017 & 1.3434 & 73.3679\\\hline
Montreal ND & \textbf{0.0608} &  \textbf{2.4652}  & 0.1049 & 5.8742 & 0.2419  & 11.928\\\hline
Tower of London & \textbf{0.1250} & \textbf{5.0731} & 0.1366 & 5.8872 &  0.3435 & 16.6457 \\\hline
Ellis Island & 0.0636 & 2.5784 & \textbf{0.0499} & \textbf{2.2486} & 0.0616 & 2.4955  \\\hline
Notre Dame & \textbf{0.0619}  & \textbf{2.5091} & 0.0876 & 4.7463 & 0.2109 & 10.5894 \\\hline
\end{tabular}
}
\caption{\small Errors in estimated camera orientations, compared to ground truth measurements. $R_f$ denotes the mean Frobenius norm error,  averaged over the different cameras, and $R_d$ is the mean angular error in degrees. Empty cells represent missing information.}\label{table:rotations}

\end{table}

\begin{table*}[tbh]
\tiny

\resizebox{\linewidth}{!}{%

\begin{tabular}{|l|c|c|c|c|c|c|c|c|c|c|c|c|c|c|c|c|c|c|c|c|c|}\hline
 &&\multicolumn{5}{|c|}{ \textbf{ Our Method } } &    \multicolumn{5}{|c|}{ \textbf{LUD} \cite{ozyesil2015robust}} &\multicolumn{4}{|c|}{ \textbf{ 1DSFM }\cite{wilson2014robust} } &\multicolumn{4}{|c|}{ \textbf{Cui }\cite{cui2015linear} }  \\\hline
\textbf{Data set} & $N_c$ & $\bar{x}$ & $\tilde{x}$ &$\bar{x}_{BA}$ & $\tilde{x}_{BA}$ & $N_r$ & $\bar{x}$ & $\tilde{x}$ &$\bar{x}_{BA}$ & $\tilde{x}_{BA}$ & $N_r$  & $\tilde{x}$ &$\bar{x}_{BA}$ & $\tilde{x}_{BA}$ & $N_r$  & $\tilde{x}$ &$\bar{x}_{BA}$ & $\tilde{x}_{BA}$ & $N_r$ \\\hline
Vienna Cathedral & 836 & \textbf{9.6} & 4.2 & 5.4 & 1.2  & 674 & 10 & 5.4 & 10 & 4.4 & 750  & 6.6 & 2e4 & \textbf{0.5} & 757  & \textbf{3.5} & \textbf{4.0} & 2.6 & 578\\\hline
Piazza del Popolo & 328 & 7.2 &  3.5 & \textbf{2.5} & \textbf{0.8} & 275 & \textbf{5} & \textbf{1.5} & 4 & 1.0 & 305  & 3.1 & 200 &2.6  & 303  & 2.6 & 3.2 & 2.4 & 294 \\\hline
NYC Library & 332 & \textbf{3.3} &  2.2 & \textbf{1.1} & 0.47 & 277 & 6 & \textbf{2.0} & 7 & 1.4 & 320  & 2.5 & 20 & \textbf{0.4} & 292  & 1.4  & 6.9 & 0.9 
& 288 \\\hline
Alamo & 577 & 2.5  & 1.2  & \textbf{0.8} & 0.35 & 482 & \textbf{2} & \textbf{0.4} & 2 & \textbf{0.3} & 547 & 1.1 & 2e7 & \textbf{0.3} & 521  & 0.6 & 3.7 & 0.5 & 500 \\\hline
Metropolis & 341 & 15.2 & 6.9 & \textbf{2.7} & 1.4 & 168 & \textbf{4} & \textbf{1.6} & 4 & 1.5 & 288  & 9.9 &  70& \textbf{1.2} & 288  & - & - & - & -\\\hline
Yorkminster & 437 & 5.6 & \textbf{2.7}  & \textbf{1.9} & 0.8 & 341 & \textbf{5} & \textbf{2.7} & 4 & 1.3 & 404  & 3.4 &  500&\textbf{0.2}  & 395  & 3.7 & 14 & 3.8 & 341 \\\hline
Montreal ND & 450 & 1.9 &  1.0 & \textbf{0.6} & \textbf{0.4} & 416 & \textbf{1} & \textbf{0.5} & 1 & \textbf{0.4} & 435  & 2.5 &  1 & 0.9 & 425  & 0.8 & 1.1 & \textbf{0.4} & 426 \\\hline
Tower of London & 572 & \textbf{11.6} & 5.0 & \textbf{4} & 1.0 & 414 & 20 & \textbf{4.7} & 10 & 3.3 & 425 & 11 &   40&\textbf{0.4}  & 414  & 4.4 & 6.2 & 1.1 & 393 \\\hline
Ellis Island & 227 & 14.1 & 6.1 & 5.3 & 1.7 & 211 & - & - & - & - & -  & 3.7 & 40 & \textbf{0.4} & 213 & 3.1 & \textbf{1.8} & 0.6 & 211 \\\hline
Notre Dame & 553 & 1.8 & 0.8 & \textbf{0.4} & \textbf{0.2} & 529 & \textbf{0.8} & \textbf{0.3} & 0.7 & \textbf{0.2} & 536  & 10 & 7&  2.1 & 500  & \textbf{0.3} & 0.8 & \textbf{0.2} & 539 \\\hline
\end{tabular}
}
\caption{\small Camera positions error in meters evaluated on the data sets of \cite{wilson2014robust}. $N_c$ is the number of images in each dataset,  $\bar{x},\tilde{x}$ are the mean and median error respectively  before bundle adjustment, and $\bar{x}_{BA},\tilde{x}_{BA}$ are the mean and median errors after bundle adjustment. $N_r$ are the number of reconstructed cameras. Empty cells represent missing information.}
\label{table:translation}
\end{table*}

\begin{table*}[tbh] 
\tiny

\resizebox{\linewidth}{!}{%
\
\begin{tabular}{|l|c|c|c|c|c|c|c|c|c|c|c|c|}\hline
&\multicolumn{3}{|c|}{ \textbf{ Our Method } } &    \multicolumn{3}{|c|}{ \textbf{LUD} \cite{ozyesil2015robust}} &\multicolumn{3}{|c|}{ \textbf{ 1DSFM }\cite{wilson2014robust} } &\multicolumn{3}{|c|}{ \textbf{Cui }\cite{cui2015linear} }  \\\hline
\textbf{Data set} & $T_{R+T}$ & $T_{BA}$ & $T_{Tot}$ & $T_{R+T}$ & $T_{BA}$ & $T_{Tot}$ &$T_{R+T}$ & $T_{BA}$ & $T_{Tot}$ & $T_{R+T}$ & $T_{BA}$ & $T_{Tot}$  \\\hline
Vienna Cathedral & \textbf{68} & 293 & \textbf{566} & 787 & 208 & 1467 & 323 & 3611 & 3934 & - & 717 & 959 \\\hline
Piazza del Popolo & \textbf{26} & 27 & \textbf{87} & 88 & 31 & 162 & 42 & 213 &  255 & - & 93 & 144\\\hline
NYC Library & \textbf{28} & 58 & 125 & 102 & 47 & 200 & 47 & 382 & 429 & - & 48 & \textbf{90}\\\hline
Alamo & \textbf{47} & 155 & \textbf{327} & 385 & 133 & 750 & 152 &  646 &  798 & - & 362 & 621\\\hline
Metropolis & \textbf{16} & 70 & \textbf{93} & 67 & 38 & 142 & 47 & 224 &  271 & - & - & -\\\hline
Yorkminster & \textbf{33} & 116 & 207 & 103 & 148 & 297 & 71 & 955 & 1026 & -& 63 & \textbf{108} \\\hline
Montreal ND & \textbf{41} & 170 & 494 & 271 & 167 & 553 & 93 & 1043  &  1136 & - & 226 & \textbf{351}\\\hline
Tower of London & \textbf{41} & 120 & 241 & 88 & 86 & 228 & 61 & 750 & 811 & - & 121 & \textbf{221}\\\hline
Ellis Island & \textbf{21} & 53 & 140 & - & - & - & 29 & 276 &  305 & - & 64 & \textbf{95} \\\hline
Notre Dame & \textbf{52} & 277 & \textbf{720} & 707 & 126 & 1047 & 205 & 2139 &  2344 & - & 793 & 1159 \\\hline
\end{tabular}
}
\caption {\small Runtime in seconds for results in Table \ref{table:translation}. $T_{R+T}$ denotes the time for motion averaging (for other methods rotation and translation estimation). $T_{BA}$ the time for bundle adjustment and $T_{Tot}$ is the total running time of the method, including the additional time for building the triangle cover. Empty cells represent image collections not tested by the authors. In addition, Cui \cite{cui2015linear} does not report results before BA.}\label{table:Times}
\end{table*}

\subsection{SfM pipeline}
\label{sec:experiments}

 The input to our algorithm is a set of independently estimated pairwise essential matrices, along with the number of inlier matches for each pairwise essential matrix. We build a triplet graph $G_T$ as we describe in Sec.~\ref{sec:GraphConstruction}, removing any triplet whose (a) collinearity score is below 0.17 radians, (b)  rotation consistency score exceeds 1.1, or (c)  translation consistency score exceeds 1 radians.  The final connected graph $G_T$ defines the collection $\tau$ of triplets of cameras. 

Next we apply our optimization algorithm as is  described in Sec.~\ref{sec:optimization}. During optimization we observed that the eigenvalues of $B_k$ and $D_k$ \eqref{eq:oprimization_total_objective_ADMM} were always distinct for  $k=1,\ldots,m$. This means that  the optimization variables,  $E_{\tau(k)}$,  indeed converge to $3$-view consistent essential matrices with distinct eigenvalues. At this stage, we follow Corollary \ref{EucRecon}   to  recover camera positions  and orientations for each triplet of cameras and align all the recovered camera matrices by similarity transformations, as is described in Sec.~\ref{subsec:retrieval}.   This yields a set of  cameras with absolute positions and orientations. The obtained  camera parameters   are finally refined  using   BA.

\subsection{Results}

 To evaluate our recovered camera orientations, we compare our results to those obtained with the methods of Chatterjee et al. \cite{chatterjee2018robust}  and Martinec et al. \cite{martinec2007robust}. For a fair comparison we evaluate these methods on same subset of images used in our method. Moreover, since in contrast to our method  these solvers do not estimate camera positions we evaluate the results before BA.  The results are summarized in Table \ref{table:rotations}. Our method outperforms these two solvers in nine out of the ten datasets.

 To evaluate our recovered camera  positions we compare our method to the following  position solvers: Cui et al. \cite{cui2015linear},  1DSFM \cite{wilson2014robust} and LUD \cite{ozyesil2015robust}. The results are summarized in Table \ref{table:translation}. Note that \cite{ozyesil2015robust,cui2015linear}  use point matches in their pipelines, while both our method and  \cite{wilson2014robust}  do not use point correspondences until the final BA. In general, the latter approaches allow for faster optimization, but result in inaccuracies before BA. On the other hand, it allows for greater improvement in the final BA, compared to \cite{ozyesil2015robust,cui2015linear}. Indeed, as can be seen in the table, while our method surpasses all the other methods before bundle adjustment in three out of the ten datasets (according to mean error), it achieves superior performance on eight out of ten after BA. 

Table \ref{table:Times} further compares execution times, before and after BA, showing that our method is efficient, compared to existing methods.  

\subsection{Technical details}
For Bundle Adjustment we used the   Theia standard SfM library~\cite{theia-manual}. Camera position results for \cite{ozyesil2015robust,cui2015linear,wilson2014robust} in Tables    \ref{table:translation} and \ref{table:Times} are taken from the papers. We ran our experiments on an Intel(R)-i7 3.60GHz with Windows. Bundle Adjustment was performed  on a Linux machine Intel(R) Xeon(R) CPU @ 2.30GHz with 16 cores.

\section{Conclusion}

We have provided in this paper algebraic conditions for the consistency of essential matrices in multiview settings and an algorithm for their averaging given noisy and partial measurements. In future research we will seek to further incorporate collinear camera triplets in the averaging algorithm, explore numerical properties, and design online consistency enforcement algorithms for SLAM settings.

\vspace{2mm}

\noindent \textbf{Acknowledgment} This research was supported in part by the Minerva foundation with funding from the Federal German Ministry for Education and Research.

{\small
\bibliographystyle{ieee_fullname}
\bibliography{egbib}
}

\setcounter{section}{0}
\title{Algebraic Characterization of Essential Matrices and Their Averaging\\
in Multiview Settings\\~\\Supplementary Material}
\author{Yoni Kasten* \hspace{1cm} Amnon Geifman* \hspace{1cm} Meirav Galun  \hspace{1cm}Ronen Basri \\
Weizmann Institute of Science\\
{\tt\small  \{yoni.kasten,amnon.geifman,meirav.galun,ronen.basri\}@weizmann.ac.il}
}
\maketitle






\makeatletter
\def\blfootnote{\xdef\@thefnmark{}\@footnotetext}
\makeatother
\blfootnote{*Equal contributors}
\section{Counter example}
In the introduction of the paper we claim that a consistent $n$-view fundamental matrix whose $3 \times 3$ blocks form essential matrices does not necessarily form a consistent $n$-view essential matrix. We justify this argument by a constructing a counter example for the case $n=3$.

First note that the following observation is true. If ${\bf t}_2 ,{\bf t}_3, {\bf a}, {\bf b}\in \Real^3 $ and $R_2,R_3\in SO(3)$ and we set
$$V_1=I_{3 \times 3},V_2 =R_2^T,V_3 =R_3^T+ {\bf a} {\bf t}_3^T,{\bf t}_1=0_{3 \times 1}$$
and
$$F_{ij}=V_i ([{\bf t}_i-{\bf t}_j]_{\times})V_j^T,~~~~ i,j=1,2,3$$
then by construction
$$
F=
  \begin{bmatrix}
    0 & F_{} & F_{13} \\
    F_{12}^T & 0 & F_{23} \\
    F_{13}^T& F_{23}^T& 0 
  \end{bmatrix}
$$
is a consistent $n$-view fundamental matrix, unless the outer product ${\bf a} {\bf t}_3^T$ is such that it reduces the full rank of $V_3$ to 2. 

Examine each of the block matrices $F_{ij}$:
\begin{align*}
F_{12}&=V_1[{\bf t}_1-{\bf t}_2]_{\times} V_2^T=[-{\bf t}_2]_{\times} R_2\\
F_{13}&=V_1[{\bf t}_1-{\bf t}_3]_{\times} V_3^T=[-{\bf t}_3]_{\times}(R_3+ {\bf t}_3 {\bf a}^T)=[-{\bf t}_3]_{\times} R_3\\
F_{23}&=V_2[{\bf t}_2-{\bf t}_3]_{\times} V_3^T=R_2^T[{\bf t}_{2}-{\bf t}_{3}]_{\times} (R_3+{\bf t}_3 {\bf a}^T)
\end{align*}
It follows from this derivation that $F_{12}$ and $F_{13}$ are essential matrices. Next, we use the identity $[R {\bf t}]_\times=R[{\bf t}]_\times R^T$, which holds for any  $R \in SO(3)$ and ${\bf t} \in \Real^3$, to show that for any ${\bf b}\in \Real^3$   
\begin{align*}
F_{23} &=[R_2^T({\bf t}_2-{\bf t}_3)]_{\times} R_2^T(R_3+{\bf t}_3{\bf a}^T)\\ &= [R_2^T({\bf t}_2-{\bf t}_3)]_{\times}(R_2^T(R_3+{\bf t}_3 {\bf a}^T)+R_2^T({\bf t}_2-{\bf t}_3){\bf b}^{T})\\ &=[R_2^T({\bf t}_2-{\bf t}_3)]_{\times} R_2^T(R_3+ {\bf t}_3 {\bf a}^T+({\bf t}_2-{\bf t}_3){\bf b}^T). \end{align*}
The trivial choice of   ${\bf a} = {\bf b}=0$ yields an essential matrix $F_{23}$   which is consistent with the essential matrices $F_{12}$ and $F_{13}$. Our aim is to make a specific choice of  $R_2, R_3, {\bf t}_2, {\bf t}_3, {\bf a}, {\bf b}$ such that  $F_{23}$ will be an essential matrix that is inconsistent with the essential matrices $F_{13}$ and $F_{12}$.  
\subsection{Choosing $R_2, R_3, {\bf t}_2, {\bf t}_3, {\bf a}, {\bf b}$}
We first give some intuition for the way  we  set the values of $R_2, R_3, {\bf t}_2, {\bf t}_3, {\bf a}, {\bf b}$. We look at the term 
\begin{equation}\label{eq:termSM}
R_2^T(R_3+ {\bf t}_3 {\bf a}^T+({\bf t}_2-{\bf t}_3){\bf b}^T)
\end{equation}
and wish to set the values such that  
this term does not collapse to $R^T_2 R_3$, but is still in the form of $R_2^T R^*$, for some  $R^* \in SO(3)$.  To that end, we look for two rotation matrices $R^*$ and $R^{**}$  such that $M=R^{*}-R^{**}$ has  rank $2$, and set the values such that the term in \eqref{eq:termSM} will be equal to $R_2^T R^*$.  

Following the construction of $M$, the SVD of $M$ is of the form \begin{equation*}
M=U  \begin{bmatrix}
    \sigma_1 &0 & 0 \\
    0 & \sigma_2 & 0 \\
    0& 0& 0 
  \end{bmatrix} V^T
\end{equation*}    
and we set 
 \begin{equation}{\bf t}_3={\bf u}_1\sigma_1, {\bf a}= {\bf v}_1, {\bf b} = {\bf v}_2, {\bf t}_2 = {\bf t}_{3} - {\bf u}_2\sigma_2. \label{SMeq::assignment_tab}
\end{equation}
Therefore, 
$$R^* - R^{**}= M={\bf u}_1\sigma_1 {\bf v}_1^T + {\bf u}_2\sigma_2 {\bf v}_2^T = {\bf t}_3 {\bf a}^T+({\bf t}_3-{\bf t}_2){\bf b}^T.$$
Now we set $R_2$ to be some rotation  matrix and $R_3=R^{**}$. Clearly,  $${\bf t}_3 {\bf a}^T+({\bf t}_3-{\bf t}_2){\bf b}^T=R_2R_2^TR^{*}-R_3$$
which means
$$R_2^TR^*=R_{2}^T(R_{3}+{\bf t}_3 {\bf a}^T+({\bf t}_3-{\bf t}_2) {\bf b}^T),$$
yielding 
$$F_{23}=[R_2^T({\bf t}_2-{\bf t}_3)]_{\times} R_2^TR^{*},$$
which is an essential matrix that in general is inconsistent  with $F_{12},F_{13}$.

\subsection{Technical details of the code provided for demonstrating a counter example}
We first randomly sampled
two rotation matrices $R^{*},R^{**}$ until we obtained $M=R^*-R^{**}$ of rank $2$, and we set $R_3 = R^{**}$.  Then, we  sampled randomly a rotation matrix, and we set  $R_2$. These selections are stored in a file called ``counter\_data.mat". 

In the code we  assign values to ${\bf t}_2, {\bf t}_3, R_3, {\bf a}, {\bf b}$, according to \eqref{SMeq::assignment_tab}. We verify that indeed $F_{12},F_{13}$ and $F_{23}$ are essential matrices. By construction, the $3$-view fundamental matrix $F$ is consistent.

In order to verify that the essential matrices are not consistent we extract the relative rotations from them.  Each essential matrix defines two possible relative rotations. We evaluate the relation $R_{12}R_{23}R_{31}$ for each of  the 8 choices of  triplet of relative rotations, and verify that none of them  closes a loop, i.e the following always holds $$R_{12}R_{23}R_{31}\neq I.$$   
{\bf An interesting observation.} In addition, we  verify in the code  that for any choice of signs for the eigenvectors of $F$, $X,Y$,  it turns out that $\sqrt{0.5}(X+Y)$ is indeed  not a block rotation matrix.  Interestingly, it holds that $\Sigma_+=-\Sigma_-$, which shows that by itself this condition, in this case, is not sufficient for  defining an appropriate consistent essential matrix. To conclude, requiring from a set of essential matrices to fulfill the sufficient conditions for consistent $n$-view fundamental matrix does not provide sufficient conditions for generating from this set,  a consistent essential matrix. The counter example demonstrates the inconsistency of the essential matrices, as well as   the violation of  our  conditions on the consistency of an  $n$-view essential matrix.    

\section{Proof of Lemma 5}
Below we prove Lemma 5 from the paper (which we rename here to be Lemma 6). 

\begin{lemma}\label{SMSVDtoSPEC} Let $E\in \mathbb{S}^{3n}$ of rank(6), and $\Sigma \in \Real^{3 \times 3} $, a diagonal matrix, with positive elements on the diagonal. Let $X, Y, U, V \in \Real^{3n \times 3}$, and we define the mapping $(X,Y) \leftrightarrow (U,V):$ $X = \sqrt{0.5}({\hat U}+{\hat V})$, $Y = \sqrt{0.5}({\hat V}-{\hat U})$, ${\hat U} = \sqrt{0.5}(X-Y), {\hat V} = \sqrt{0.5}(X+Y)$. 

Then, the (thin) SVD of $E$ is of the form $$E=\left[ {\hat{U}},{\hat{V}}\right]\left(\begin{array}{cc}
\Sigma\\
 & \Sigma
\end{array}\right)\left[\begin{array}{c}
 {\hat{V}}^{T}\\
 {\hat{U}}^{T}
\end{array}\right]$$   if and only if the (thin) spectral decomposition of $E$ is 
of the form 

$$E=[X, Y]\left(\begin{array}{cc}
\Sigma\\
 & -\Sigma
\end{array}\right)\left[\begin{array}{c}
 X^{T}\\
 Y^{T}
\end{array}\right]$$ 
\end{lemma}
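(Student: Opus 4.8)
The plan is to prove the equivalence through a single polarization-type identity that converts the ``mixed'' SVD factorization into the symmetric eigendecomposition and back, so that the two directions are handled by the same computation read forward and backward. The heart of the argument is purely algebraic: expanding the block product in the SVD form produces the symmetric matrix $\hat{U}\Sigma\hat{V}^{T} + \hat{V}\Sigma\hat{U}^{T}$, and this is precisely the difference of two symmetric rank-three terms built from $\hat{U}+\hat{V}$ and $\hat{V}-\hat{U}$.

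Concretely, for the forward direction I would first multiply out the block factors of the assumed SVD form to obtain $E = \hat{U}\Sigma\hat{V}^{T} + \hat{V}\Sigma\hat{U}^{T}$, and then apply the identity
\[
\hat{U}\Sigma\hat{V}^{T} + \hat{V}\Sigma\hat{U}^{T}
= \tfrac{1}{2}(\hat{U}+\hat{V})\Sigma(\hat{U}+\hat{V})^{T}
- \tfrac{1}{2}(\hat{V}-\hat{U})\Sigma(\hat{V}-\hat{U})^{T},
\]
which follows by expanding both outer products and cancelling the common $\hat{U}\Sigma\hat{U}^{T}$ and $\hat{V}\Sigma\hat{V}^{T}$ contributions. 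Substituting the mapping $X=\sqrt{0.5}(\hat{U}+\hat{V})$, $Y=\sqrt{0.5}(\hat{V}-\hat{U})$ collapses the right-hand side to
\[
X\Sigma X^{T} - Y\Sigma Y^{T}
= [X,Y]\begin{pmatrix}\Sigma & \\ & -\Sigma\end{pmatrix}\begin{bmatrix} X^{T}\\ Y^{T}\end{bmatrix},
\]
which is the claimed spectral form. The reverse direction reruns the identical computation starting from $X\Sigma X^{T}-Y\Sigma Y^{T}$ and using the inverse mapping $\hat{U}=\sqrt{0.5}(X-Y)$, $\hat{V}=\sqrt{0.5}(X+Y)$, so no separate argument is needed.

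The one place that genuinely requires care is confirming that each resulting factorization is of the asserted type rather than merely an algebraic identity: a spectral decomposition needs $[X,Y]$ to have orthonormal columns, while an SVD needs both $[\hat{U},\hat{V}]$ and its block-swapped partner $[\hat{V},\hat{U}]$ to have orthonormal columns. I would settle this by observing that the mapping $(\hat{U},\hat{V})\mapsto(X,Y)$ is exactly right-multiplication by the fixed matrix $W=\tfrac{1}{\sqrt{2}}\left[\begin{smallmatrix} I & -I \\ I & I\end{smallmatrix}\right]$, i.e. $[X,Y]=[\hat{U},\hat{V}]W$, and that $W^{T}W=I_{6}$. Hence $[X,Y]^{T}[X,Y]=W^{T}\bigl([\hat{U},\hat{V}]^{T}[\hat{U},\hat{V}]\bigr)W$, so column-orthonormality transfers in both directions; on the SVD side the block swap $[\hat{U},\hat{V}]\mapsto[\hat{V},\hat{U}]$ is an orthogonal column permutation and therefore also preserves orthonormality. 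The positivity of the diagonal of $\Sigma$ then pins down the signs, so the $\Sigma,-\Sigma$ pattern really gives the three positive and three negative eigenvalues of the spectral decomposition.

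I do not expect a substantive obstacle here: the identity is a two-line expansion, and the only subtlety is the orthonormality bookkeeping, where one must not forget that the SVD form uses distinct (block-swapped) left and right factors. Making the orthogonal $6\times 6$ change of basis $W$ explicit is what keeps that bookkeeping clean and handles both directions at once.
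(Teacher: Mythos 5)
Your proposal is correct and takes essentially the same route as the paper's own proof (given in the supplementary material): the identical polarization identity $\hat{U}\Sigma\hat{V}^{T}+\hat{V}\Sigma\hat{U}^{T}=\tfrac{1}{2}(\hat{U}+\hat{V})\Sigma(\hat{U}+\hat{V})^{T}-\tfrac{1}{2}(\hat{V}-\hat{U})\Sigma(\hat{V}-\hat{U})^{T}$ run in both directions, with column-orthonormality transferred between $[\hat{U},\hat{V}]$ and $[X,Y]$. Your explicit orthogonal matrix $W=\tfrac{1}{\sqrt{2}}\left[\begin{smallmatrix}I&-I\\I&I\end{smallmatrix}\right]$ is just a cleaner packaging of the paper's observation that $[\hat{U},\hat{V}]^{T}[\hat{U},\hat{V}]=I_{6\times 6}$ implies $[X,Y]^{T}[X,Y]=I_{6\times 6}$.
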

\begin{proof}
($\Rightarrow$)      
\begin{align}\label{SMeq:SVD2Spec}
E &=\left[ \hat{U},\hat{V}\right]\left(\begin{array}{cc}
\Sigma\\
 & \Sigma
\end{array}\right)\left[\begin{array}{c}
 \hat{V}^{T}\\
 \hat{U}^{T}
\end{array}\right]=\hat{U}\Sigma{\hat{V}^{T}}+\hat{V}\Sigma{\hat{U}^{T}}=\nonumber \\
 & 0.5\cdot(\hat{U}+\hat{V)}\Sigma(\hat{U}+\hat{V})^{T} -0.5\cdot(\hat{V}-\hat{U)}\Sigma(\hat{V}-\hat{U})^{T}=\nonumber \\
 & 0.5\left[\hat{U}+\hat{V},\hat{V}-\hat{U}\right]\left(\begin{array}{cc}
\Sigma\\
 & -\Sigma
\end{array}\right)\left[\hat{U}+\hat{V},\hat{V}-\hat{U}\right]^{T} =\nonumber \\ 
&
[X,Y]\left(\begin{array}{cc}
\Sigma\\
 & -\Sigma
\end{array}\right)\left[\begin{array}{c}
 X^{T}\\
 Y^{T}
\end{array}\right]
\end{align}
where  $X = \sqrt{0.5}({\hat U}+{\hat V}) $ and $ Y = \sqrt{0.5}({\hat V}-{\hat U})$. Since, $ \left[\begin{array}{c}
 \hat{U}^{T}\\
 \hat{V}^{T}
\end{array}\right]\left[ \hat{U},\hat{V}\right]=I_{6 \times 6}$, it yields $ \left[\begin{array}{c}
X^{T}\\
Y^{T}
\end{array}\right]\left[ X,Y\right]=I_{6 \times 6}$, concluding that the last term in \eqref{SMeq:SVD2Spec} is indeed (thin) spectral decomposition of $E$.

 ($\Leftarrow$) 
\begin{align}\label{SMeq:spec2SVD}
E&=[X,Y]\left(\begin{array}{cc}
\Sigma\\
 & -\Sigma
\end{array}\right)[X,Y]^{T} = X\Sigma X^T-Y\Sigma Y^T= \nonumber \\
&0.5(X+Y)\Sigma(X-Y)^T+0.5(X-Y)\Sigma(X+Y)^T= \nonumber\\ &0.5[X-Y,X+Y]\left(\begin{array}{cc}
\Sigma\\
 & \Sigma
\end{array}\right)[X+Y,X-Y]^T = \nonumber \\
&[{\hat U},{\hat V}]\left(\begin{array}{cc}
\Sigma\\
 & \Sigma
\end{array}\right)\left[\begin{array}{c}
 {\hat{V}}^{T}\\
 {\hat{U}}^{T}
\end{array}\right]
\end{align}
where  ${\hat U} = \sqrt{0.5}(X-Y) $ and $ {\hat V} = \sqrt{0.5}(X+Y)$. The same argument for orthogonality works here, showing that indeed that last term in \eqref{SMeq:spec2SVD} is SVD of E.
\end{proof}


%
\section{Handling scaled rotations}
Our optimization enforces the consistency of camera triplets, while allowing the essential matrices to be scaled arbitrarily. This is possible, because our theory can be generalized to handle scaled rotations. Below we generalize  Definition 4 from the paper to allow essential matrices of the form $E_{ij}= \alpha_i R_{i}^{T}([\mathbf{t}_{i}]_{\times}-[\mathbf{t}_{j}]_{\times})R_{j}\alpha_j$ and prove that the main theorem, i.e., Theorem 3 from the paper, holds for this generalization as well. This argument is then used to justify our treatment of camera triplets.    
\begin{definition}
 \label{SMdef:consistent_mv_essential}
 An $n$-view essential  matrix $E$ is called {\bf scaled consistent}  if there exist $n$  rotation matrices $\{ R_i\}_{i=1}^{n}$, $n$  vectors $\{{\bf t}_i\}_{i=1}^{n}$ and $n$ non-zero scalars  $\{ \alpha_i \}_{i=1}^n$  such that $E_{ij}= \alpha_i R_{i}^{T}([\mathbf{t}_{i}]_{\times}-[\mathbf{t}_{j}]_{\times})R_{j}\alpha_j$. \end{definition}

\noindent The following theorem is a generalized version of Thm.~2 from the paper, and the derivations here are inspired by the derivations made in [19]. 
\begin{theorem}\label{SMthm:necessary_essential} Let $E$ be a scaled consistent $n$-view essential matrix, associated with scaled rotation matrices $\{\alpha_i R_i\}_{i=1}^n$, $\alpha_i\neq0$ and camera centers $\{{\bf t}_i\}_{i=1}^n$.  $E$ satisfies the following conditions
\begin{enumerate}
\item $E$ can be formulated as $E = A + A^T$ where $A=UV^T$ and $U, V \in \Real^{3n \times 3}$
\begin{align*}
V =\left[\begin{array}{ccc}
\alpha_1 R_{1}^T\\
\vdots\\
\alpha_n R_{n}^T
\end{array}\right] ~~ & ~~
U =\left[\begin{array}{c}
\alpha_1 R_{1}^T T_{1}\\
\vdots\\
\alpha_n R_{n}^T T_{n}
\end{array}\right]
\end{align*}
with $T_i = [{\bf t}_i]_{\times}$ and w.l.o.g $\sum_{i=1}^n \alpha_i^2{\bf   t}_i=0$.

\item Each column of $U$ is orthogonal to each column of $V$, i.e., $V^T U=0_{3 \times 3}$

\item rank(V)=3 

\item If not all $\{{\bf t}_i\}_{i=1}^n$ are collinear, then rank(U) and rank(A) = 3. Moreover, if the (thin) SVD of $A$ is  $A=\hat{U}\Sigma\hat{V^{T}}$, with  $\hat {U}, \hat {V} \in \Real^{3n \times 3}$ and $\Sigma \in \Real^{3 \times 3}$ then
 the (thin) SVD of $E$ is $$E = \left[\hat{U},\hat{V}\right]\left(\begin{array}{cc}
\Sigma\\
 & \Sigma
\end{array}\right)\left[\begin{array}{c}
\hat{V}^{T}\\
\hat{U}^{T}
\end{array}\right]$$
implying  rank(E) = 6.
\end{enumerate}
\end{theorem}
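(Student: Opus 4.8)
The plan is to follow the template of the unscaled statement, Theorem~\ref{thm:necessary_essential}, replacing each rotation $R_i$ by the scaled rotation $\alpha_i R_i$ and carefully tracking how the scalars propagate. First I would verify Condition~1 by a direct block computation. Writing $V_i = \alpha_i R_i^T$ and $U_i = \alpha_i R_i^T T_i$ for the $i$th blocks of $V$ and $U$, the $(i,j)$ block of $A = UV^T$ is $\alpha_i R_i^T T_i R_j \alpha_j$; since $T_j$ is skew symmetric, $T_j^T = -T_j$, so the $(i,j)$ block of $A^T = VU^T$ is $-\alpha_i R_i^T T_j R_j \alpha_j$. Adding them reproduces exactly $E_{ij} = \alpha_i R_i^T(T_i - T_j)R_j \alpha_j$ (and the diagonal blocks vanish because $T_i + T_i^T = 0$, consistent with $E_{ii}=0$). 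The ``w.l.o.g.'' normalization $\sum_i \alpha_i^2 \mathbf{t}_i = 0$ is then a gauge choice: because $E_{ij}$ depends on the centers only through $[\mathbf{t}_i - \mathbf{t}_j]_\times$, translating every $\mathbf{t}_i$ by the weighted centroid $\mathbf{c} = (\sum_i \alpha_i^2 \mathbf{t}_i)/(\sum_i \alpha_i^2)$ leaves $E$ unchanged and forces the weighted sum to vanish; the denominator is positive since every $\alpha_i \neq 0$.

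Conditions~2 and~3 then follow from short computations that exploit $R_i R_i^T = I$. For Condition~2, $V^T U = \sum_i (\alpha_i R_i^T)^T (\alpha_i R_i^T T_i) = \sum_i \alpha_i^2 T_i = [\sum_i \alpha_i^2 \mathbf{t}_i]_\times = 0$ by the normalization just imposed. For Condition~3, $V^T V = \sum_i \alpha_i^2 R_i R_i^T = (\sum_i \alpha_i^2)\, I_{3\times 3}$, a positive multiple of the identity, so $V$ has full column rank $3$.

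Condition~4 is where the non-collinearity hypothesis enters, and I expect the rank argument to be the main obstacle. Suppose $U\mathbf{x} = 0$ for some $\mathbf{x}\in\Real^3$; then $\alpha_i R_i^T T_i \mathbf{x} = 0$ for every $i$, and since $\alpha_i \neq 0$ and $R_i$ is invertible this forces $[\mathbf{t}_i]_\times \mathbf{x} = \mathbf{t}_i \times \mathbf{x} = 0$, i.e.\ every $\mathbf{t}_i$ is parallel to $\mathbf{x}$. Because the normalization places the weighted centroid at the origin, any common line containing all the $\mathbf{t}_i$ must pass through the origin, so ``all $\mathbf{t}_i$ collinear'' is equivalent here to ``all $\mathbf{t}_i$ parallel to a common direction''. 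Hence, when the centers are not collinear, $U\mathbf{x} = 0$ implies $\mathbf{x} = 0$, so $\mathrm{rank}(U) = 3$; and since $V$ has full column rank, $\mathrm{range}(A) = U\,\mathrm{range}(V^T) = \mathrm{range}(U)$, giving $\mathrm{rank}(A) = 3$.

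Finally I would assemble the SVD of $E$. Let $A = \hat U \Sigma \hat V^T$ be the thin SVD, so the columns of $\hat U$ span $\mathrm{range}(A) = \mathrm{range}(U)$ while those of $\hat V$ span $\mathrm{range}(A^T) = \mathrm{range}(V)$. Condition~2, $V^T U = 0$, says these two subspaces are orthogonal, hence $\hat U^T \hat V = 0$ and $[\hat U, \hat V]$ has orthonormal columns. Then, using $\Sigma^T = \Sigma$,
\[
[\hat U, \hat V]\begin{pmatrix}\Sigma & \\ & \Sigma\end{pmatrix}\begin{bmatrix}\hat V^T \\ \hat U^T\end{bmatrix} = \hat U \Sigma \hat V^T + \hat V \Sigma \hat U^T = A + A^T = E,
\]
and since $\Sigma$ is positive diagonal this is a valid thin SVD, yielding $\mathrm{rank}(E) = 6$. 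The only genuinely delicate point in the whole argument is the collinearity reduction used to get $\mathrm{rank}(U)=3$; every other step is a transcription of the unscaled proof with the factors $\alpha_i$ carried along.
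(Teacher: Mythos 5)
Your proof is correct and follows essentially the same route as the paper's: a direct block computation for the decomposition $E=A+A^T$ with the weighted-centroid gauge $\sum_i\alpha_i^2\mathbf{t}_i=0$, the identity $V^TU=[\sum_i\alpha_i^2\mathbf{t}_i]_\times=0$, the rank argument via $U\mathbf{x}=0\Rightarrow\mathbf{t}_i\times\mathbf{x}=0$ contradicting non-collinearity, and assembly of the thin SVD of $E$ from that of $A$ using the orthogonality of $\mathrm{Span}(U)$ and $\mathrm{Span}(V)$. Your minor variations (proving $\mathrm{rank}(V)=3$ via $V^TV=(\sum_i\alpha_i^2)I$ rather than blockwise full rank, and spelling out the parallel-versus-collinear point the paper leaves implicit) are cosmetic rather than a genuinely different approach.
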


\begin{proof}

\begin{enumerate}
\item The decomposition is a straightforward result from Def. \ref{SMdef:consistent_mv_essential}. Moreover, any global translation of all the camera centers, will not change the values of the entries of $E$.  In particular, if we denote the camera centers by  $\{{\bf \tilde{t}}_i\}_{i=1}^n$ and they are  translated to their new position ${\bf t}_i = {\bf \tilde{t}}_i - \frac{\sum \alpha_i^2 {\bf \tilde{t}}_i }{\sum \alpha_i^2}  $, then $\sum \alpha_i^2 {\bf t}_i =0$.    

\item By the decomposition above, we have that 
$$V^TU=\sum_{i=1}^{i=n}\alpha_i^2R_iR_i^TT_i=[\sum_{i=1}^{i=n}\alpha_i^2{\bf t}_i]_\times=0$$
which concludes that each column of $U$ is orthogonal to each column of $V$.\\
\item $rank(V)=3$ since each block of $V$ is of rank 3 and $V$ has 3 columns.\\
\item  Assume by contradiction that $rank(U)< 3$. Then, $\exists {\bf t} \in \Real^3, {\bf t} \neq 0$,
 s.t. $U {\bf t} = 0$. This implies that ${\bf t}_{i} \times {\bf t} = 0$ for all
$i=1,\ldots,n$. This implies that all the ${\bf t}_i$'s are parallel to ${\bf t}$,  violating our assumption that not all camera locations are collinear. Consequently $rank(U) = 3$ and therefore also $rank(A) = 3$.
Finally, let $A=\hat{U}\Sigma \hat{V}^T$ the SVD of $A$. Since $A=UV^T$ we get that 
$$Span(U)=Span(\hat{U}),Span(V)=Span(\hat{V}).$$ 
Then, since $E=A+A^T$, we get 
$$E=\hat{U}\Sigma \hat{V}^T+\hat{V}\Sigma \hat{U}^T=  \begin{bmatrix}\hat{U} &\hat{V} \end{bmatrix}\begin{pmatrix}\Sigma & 0 \\
0 & \Sigma \\
\end{pmatrix} \begin{bmatrix} \hat{V}^T \\ \hat{U}^T \end{bmatrix}.$$ 

Following the result that the columns of $U$ are orthogonal to those of $V$, it turns out that  $\begin{bmatrix} \hat{U} &\hat{V} \end{bmatrix}$ is column orthogonal,  concluding  that the form above is the SVD of $E$, and $rank(E)=6$.
\end{enumerate}
\end{proof}

Next, we show that Thm. 3 in the paper is also applicable with the generalized definition, i.e., Def. \ref{SMdef:consistent_mv_essential}.

\begin{theorem}
Let $E \in \mathbb{S}^{3n}$ be a  consistent $n$-view fundamental matrix with a set of $n$ cameras  whose centers are not all collinear. We denote by $\Sigma_+ ,\Sigma_- \in \mathbb{R}^{3\times3}$ the diagonal matrices with   the 3 positive and 3 negative eigenvalues of $E$, respectively. The following conditions are equivalent:
\begin{enumerate}
\item $E$ is a scaled consistent $n$-view essential matrix
\item The (thin) SVD of $E$   can be written in the form  $$E=\left[ {\hat{U}},{\hat{V}}\right]\left(\begin{array}{cc}
\Sigma_{+}\\
 & \Sigma_{+}
\end{array}\right)\left[\begin{array}{c}
 {\hat{V}}^{T}\\
 {\hat{U}}^{T}
\end{array}\right]$$
with $\hat{U}, \hat{V} \in \Real^{3n \times 3}$ such that each $3 \times 3$ block of  $\hat{V}$, $\hat{V}_i$, $i=1, \ldots, n$,  is a scaled rotation matrix, i.e., ${\hat V_i}=\hat {\alpha}_i {\hat R}_i $, where ${\hat R}_i \in SO(3) $ and ${\hat \alpha}_i \neq 0$.   
We say that $\hat{V}$ is a scaled block rotation matrix.

\item $\Sigma_+ = - \Sigma_-$ and the  (thin) spectral decomposition of $E$ is of the form $$E=[X, Y]\left(\begin{array}{cc}
\Sigma_{+}\\
 & \Sigma_{-}
\end{array}\right)\left[\begin{array}{c}
 X^{T}\\
 Y^{T}
\end{array}\right]$$   such that $\sqrt{0.5}(X+Y)$ is a scaled block rotation matrix.
\end{enumerate}
\label{SMconsistencyThm} 
\end{theorem}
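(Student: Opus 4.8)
The plan is to follow the three-way equivalence proof of Thm.~\ref{consistencyThm} essentially verbatim, substituting the scaled ingredients wherever the unscaled ones appeared. The only structural novelty is that the uniform factor $1/\sqrt{n}$ that arose there becomes a per-camera factor, which is precisely why the conclusion weakens from a block rotation to a \emph{scaled} block rotation. Throughout I would use the generalized necessary conditions (Thm.~\ref{SMthm:necessary_essential}) in place of Thm.~\ref{thm:necessary_essential}, and the purely algebraic Lemma~\ref{SMSVDtoSPEC} to toggle between the SVD and spectral forms.

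For $(1)\Rightarrow(2)$, I would invoke Thm.~\ref{SMthm:necessary_essential} to write $E=A+A^{T}$ with $A=UV^{T}$ and $V_i=\alpha_i R_i^{T}$, so that $V^{T}V=\bigl(\sum_i \alpha_i^2\bigr)I=:sI$. Diagonalizing the full-rank $U^{T}U=QDQ^{T}$ with $Q\in SO(3)$ and substituting into $A^{T}A=VQDQ^{T}V^{T}$, the identity $Q^{T}V^{T}VQ=sI$ turns this into the spectral decomposition $A^{T}A=(\tfrac{1}{\sqrt s}VQ)(sD)(\tfrac{1}{\sqrt s}VQ)^{T}$ with orthonormal columns. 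Matching this against $A^{T}A=\hat V\Sigma^{2}\hat V^{T}$ coming from the SVD $A=\hat U\Sigma\hat V^{T}$, and using uniqueness up to a rotation $H\in SO(3)$, gives $\hat V=\tfrac{1}{\sqrt s}VQH^{T}$, hence $\hat V_i=\tfrac{\alpha_i}{\sqrt s}\,(R_i^{T}QH^{T})$. Each block is therefore $\hat\alpha_i\hat R_i$ with $\hat\alpha_i=\alpha_i/\sqrt s\neq0$ and $\hat R_i\in SO(3)$, i.e.\ $\hat V$ is a scaled block rotation; the sign and magnitude bookkeeping that forces $\Sigma=\Sigma_+$ is identical to the unscaled case once Lemma~\ref{SMSVDtoSPEC} is applied to the SVD form of $E$ supplied by condition 4 of Thm.~\ref{SMthm:necessary_essential}.

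For $(2)\Rightarrow(1)$, I would set $\bar U=\hat U\Sigma_+$ and write $E=\bar U\hat V^{T}+\hat V\bar U^{T}$. The condition $E_{ii}=0$ makes $\bar U_i\hat V_i^{T}$ skew, and since $\hat V_i=\hat\alpha_i\hat R_i$ is invertible ($\hat\alpha_i\neq0$), Lemma~\ref{lemma:skew} yields $\bar U_i=\hat V_i\hat T_i$ for a skew $\hat T_i=[\hat{\mathbf t}_i]_\times$. Substituting back gives $E_{ij}=\hat V_i(\hat T_i-\hat T_j)\hat V_j^{T}=\hat\alpha_i\hat R_i([\hat{\mathbf t}_i]_\times-[\hat{\mathbf t}_j]_\times)\hat R_j^{T}\hat\alpha_j$, which is exactly the scaled-consistent form of Def.~\ref{SMdef:consistent_mv_essential} upon setting $R_i=\hat R_i^{T}$, $\mathbf t_i=\hat{\mathbf t}_i$ and $\alpha_i=\hat\alpha_i$. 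The equivalences $(2)\Leftrightarrow(3)$ then require no modification at all: Lemma~\ref{SMSVDtoSPEC} converts the SVD form (with $\Sigma=\Sigma_+$) to the spectral form with $\Sigma_-=-\Sigma_+$ under the change of variables $\hat V=\sqrt{0.5}(X+Y)$, and the property of being a scaled block rotation is transported verbatim through this linear identification in both directions.

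I expect the only delicate point to be the $(1)\Rightarrow(2)$ spectral-uniqueness step: one must argue that matching $\tfrac{1}{s} VQ\,D\,Q^{T}V^{T}$ to $\hat V\Sigma^{2}\hat V^{T}$ determines $\hat V$ only up to a right multiplication by some $H\in O(3)$, and that $H$ may be taken in $SO(3)$ so that each $\hat R_i=R_i^{T}QH^{T}$ genuinely lies in $SO(3)$ rather than merely $O(3)$. The non-uniform scales $\alpha_i/\sqrt s$ do not interfere with this, because they factor cleanly out of each individual block and never enter the orthogonality computation; everything else is a routine transcription of the unscaled argument.
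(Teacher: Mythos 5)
Your proposal is correct and follows the paper's own supplementary proof essentially verbatim: the same decomposition $E=A+A^{T}$ with $A=UV^{T}$, the same diagonalization of $U^{T}U$ combined with $Q^{T}V^{T}VQ=\bigl(\textstyle\sum_{i}\alpha_i^{2}\bigr)I$ to get $\hat V_i=\frac{\alpha_i}{\sqrt{s}}R_i^{T}QH^{T}$, the same use of Lemma~\ref{lemma:skew} for $(2)\Rightarrow(1)$, and the same appeal to Lemma~\ref{SMSVDtoSPEC} for $(2)\Leftrightarrow(3)$. The subtlety you flag about taking $H$ in $SO(3)$ rather than $O(3)$ is treated at exactly the same level of detail in the paper (which resolves it by the sign-flip freedom in the eigendecomposition), so nothing in your route diverges from the published argument.
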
 
\begin{proof}
\textit{(1)$\Rightarrow$(2)}  Assume that $E$ is a scaled consistent $n$-view essential matrix. Then, according to Thm.~\ref{SMthm:necessary_essential}, $E=A+A^T$ with $A=UV^T$ and $U, V \in \Real^{3n \times 3}$ with 
\begin{align*}
V =\left[\begin{array}{ccc}
\alpha_1 R_{1}^T\\
\vdots\\
\alpha_n R_{n}^T
\end{array}\right] ~~ & ~~
U =\left[\begin{array}{c}
\alpha_1 R_{1}^T T_{1}\\
\vdots\\
\alpha_n R_{n}^T T_{n}
\end{array}\right]
\end{align*}
where $T_i = [{\bf t}_i]_{\times}$. 
Since $A=UV^T$ and $rank(A)=3$, then $A^TA=VU^TUV$ and $A^TA \succeq 0$ with $rank(A^TA)=3$ ($A$ and $A^TA$ share the same null space). First, we construct  a spectral decomposition to $A^TA$, relying on the special properties of $U$ and $V$. We have $rank(U)=3$, and therefore $U^TU$, which is a  $3 \times 3$, symmetric positive semi-definite matrix, is of full rank. Its spectral decomposition is of the form $U^TU = P D P^T$, where $P \in SO(3)$. (Spectral decomposition guarantees that $P \in O(3)$. However, $P$ can be replaced by $-P$ if $\det(P)=-1$.) $D \in \Real^{3 \times 3}$ is a diagonal matrix consisting of the (positive) eigenvalues  of $U^TU$. This spectral decomposition yields the following decomposition 
\begin{equation}\label{SMeq:decomposition}
A^T A = VPDP^TV^T.
\end{equation}
Now, note that
\begin{align*}
 P^TV^T VP = P^T \begin{bmatrix}\alpha_1 R_1 & \ldots & \alpha_nR_n \end{bmatrix} \begin{bmatrix}\alpha_1R_1^T \\ \vdots \\\alpha_n R_n^T \end{bmatrix} P\\ = P^T(\sum_{i=1}^n \alpha_i^2)I_{3 \times 3} P= (\sum_{i=1}^n \alpha_i^2)I_{3 \times 3}. 
\end{align*}
Let $\alpha = \sum_{i=1}^n \alpha_i^2$.
By a simple manipulation \eqref{SMeq:decomposition} becomes a spectral decomposition  
\begin{equation}\label{SMeq:spectral1}
 A^TA= \frac{1}{\sqrt{\alpha}}VP(\alpha D)P^TV^T\frac{1}{\sqrt{\alpha}}.\end{equation}
On the other hand, the (thin) SVD of $A$ is of the form $A ={\hat U} \Sigma {\hat V}^T$, where ${\hat U}, {\hat V} \in \Real^{3n \times 3}$, $\Sigma \in \Real^{3 \times 3}$. This means that 
\begin{equation}\label{SMeq:spectral2}
A^TA = {\hat V} \Sigma^2 {\hat V}^T.
\end{equation}

Due to the uniqueness of the eigenvector decomposition, \eqref{SMeq:spectral1} and \eqref{SMeq:spectral2} collapse to the same eigenvector decomposition, up to some global rotation,  $H \in SO(3)$, that is  $\frac{1}{\sqrt{\alpha}}VP= {\hat V} H$, which means that 

\begin{align} \label{SMroteq}
{\hat V}_i = \frac{\alpha_i}{\sqrt{\alpha}} R_i^TP H^T.
\end{align}
 Since $R_i^T, P, H^{T} \in SO(3)$, then setting $\hat {\alpha}_i =:\frac{\alpha_i}{\sqrt{\alpha}} $ and  ${\hat R}_i=:R_i^TPH^{T} \in SO(3)$, shows that ${\hat V}$ is a scaled block rotation matrix. Finally, by Thm. \ref{SMthm:necessary_essential}, the (thin) SVD of $E$ is of the form 

\begin{equation}\label{SMeq:E_SVD}
E = \left[\hat{U},\hat{V}\right]\left(\begin{array}{cc}
\Sigma\\
 & \Sigma
\end{array}\right)\left[\begin{array}{c}
\hat{V}^{T}\\
\hat{U}^{T}
\end{array}\right]
\end{equation}
and according to Lemma \ref{SMSVDtoSPEC}, 
the eigenvalues of $E$ are $\Sigma$ and $-\Sigma$. Since the elements on the diagonal of $\Sigma$ are positive, and $E$ is symmetric with exactly 3 positive eigenvalues $\Sigma_+$ and 3 negative eigenvalues $\Sigma_-$,  it follows that $\Sigma = \Sigma_+$ and $-\Sigma = \Sigma_-$ concluding the proof.

\textit{(2)$\Rightarrow$(1)} Let $E$ be a consistent $n$-view fundamental matrix that satisfies condition (2). We would like to show that $E$  is a scaled consistent $n$-view essential matrix. By condition (2) $E$ can be written as
\begin{equation}\label{SMeq:E_hat}
E={\hat{U}}\Sigma_+{\hat{V}^{T}}+\hat{V}\Sigma_+{\hat{U}^{T}}=\bar{{U}}\hat{V}^T+\hat{V}\bar{U^{T}}
\end{equation}
where ${\bar U}={\hat U} \Sigma_+$ with  ${\hat V}_i = \hat{\alpha_i} {\hat R}_i$, ${\hat R}_i \in SO(3)$. By definition  $E_{ii}=0$, and this implies that $\bar{U}_i\hat{V}_i^T$ is a skew symmetric matrix. Using  Lemma 4 in the paper,  $\bar{U}_i=\hat{V}_i {\hat T}_i $ for some skew symmetric matrix ${\hat T}_i = [{\hat {\bf t}}_i]_{\times}$. Plugging ${\bar U}_i$ and ${\hat V}_i$ in \eqref{SMeq:E_hat} yields 
\begin{align}E_{ij}=\bar{U}_i\hat{V}_j^T+\hat{V}_i\bar{U}_j^T= \hat{\alpha_i}\hat{\alpha_j}{\hat R}_i {\hat T}_i {\hat R}_j^T - \hat{\alpha_i}\hat{\alpha_j}{\hat R}_i {\hat T}_j {\hat R}_j^T \nonumber\\= {\hat{\alpha_i}R_i}^T([{\bf t}_i]_{\times} - [{\bf t}_j]_{\times})R_j\hat{\alpha_j}\nonumber\end{align}
where $R_i = {\hat R}_i^T$, $\alpha_i = \hat{\alpha}_i$ and  ${\bf t}_i =  {\hat{\bf t}}_i$, concluding the proof. Finally, the derivation of the equivalence 
$(2)\Leftrightarrow(3)$ is exactly as in Thm. 3 in the paper. 
\end{proof}

\begin{corollary}
Let $E$ be a scaled consistent $n$-view matrix, then 
\begin{enumerate}
\item The scale of each block ${\hat V}_i$ in the scaled block rotation matrix ${\hat V}$ can be calculated using by $\hat{\alpha_i}=({det({\hat V}_i)})^{\frac{1}{3}}$
\item Let $E$ be a scaled consistent $n$-view essential matrix. Then, the transformation  $diag(\frac{1}{\hat{\alpha_1}} I_{3\times3},\ldots,\frac{1}{\hat{\alpha_n}} I_{3\times3})\cdot E \cdot diag(\frac{1}{\hat{\alpha_1}} I_{3\times3},\ldots,\frac{1}{\hat{\alpha_n}} I_{3\times3})$ transform $E$ to be a consistent $n$-view essential matrix.
\end{enumerate}
\end{corollary}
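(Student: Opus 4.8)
The plan is to dispatch the two parts separately, since both follow rather directly from the structural decomposition established in Theorem~\ref{SMconsistencyThm}.

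For the first claim, I would start from the fact that, because $E$ is scaled consistent, condition~(2) of Theorem~\ref{SMconsistencyThm} supplies an SVD whose right factor $\hat{V}$ is a scaled block rotation matrix, i.e.\ each block satisfies $\hat{V}_i = \hat{\alpha}_i \hat{R}_i$ with $\hat{R}_i \in SO(3)$ and $\hat{\alpha}_i \neq 0$. The only computation needed is then
$$\det(\hat{V}_i) = \det(\hat{\alpha}_i \hat{R}_i) = \hat{\alpha}_i^3 \det(\hat{R}_i) = \hat{\alpha}_i^3,$$
using $\det(\hat{R}_i)=1$. Taking the real cube root recovers $\hat{\alpha}_i = (\det(\hat{V}_i))^{1/3}$ with the correct sign, since the real cube root preserves sign. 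I would note that $\det(\hat{V}_i)$ is insensitive to the residual global-rotation freedom of the decomposition (right-multiplying $\hat{V}_i$ by a fixed $H \in SO(3)$ leaves the determinant unchanged), so the recipe is well defined; the only remaining freedom is a single global sign shared by all the $\hat{\alpha}_i$.

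For the second claim, I would work block by block. The matrix $S := \mathrm{diag}(\tfrac{1}{\hat{\alpha}_1}I_{3\times 3}, \ldots, \tfrac{1}{\hat{\alpha}_n}I_{3\times 3})$ is symmetric and block-diagonal, so the $(i,j)$ block of $SES$ is simply $\tfrac{1}{\hat{\alpha}_i \hat{\alpha}_j}\,E_{ij}$. Invoking the block-wise form produced in the $(2)\Rightarrow(1)$ direction of Theorem~\ref{SMconsistencyThm}, namely $E_{ij} = \hat{\alpha}_i R_i^T([\mathbf{t}_i]_\times - [\mathbf{t}_j]_\times) R_j \hat{\alpha}_j$, the scale factors cancel and
$$(SES)_{ij} = R_i^T([\mathbf{t}_i]_\times - [\mathbf{t}_j]_\times) R_j,$$
which is exactly the form demanded by Definition~\ref{SMdef:consistent_mv_essential} with all scales equal to one, i.e.\ ordinary consistency (Definition~4 in the paper). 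I would also remark that multiplying each block by a nonzero scalar preserves its rank and scales both of its singular values by the same factor, so $SES$ remains an $n$-view essential matrix (rank-two blocks with equal singular values and vanishing diagonal blocks); hence $SES$ is a genuine consistent $n$-view essential matrix.

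The argument is essentially mechanical, so there is no deep obstacle. The one point requiring care is the bookkeeping of signs: I must use the real cube root in Part~1 so that the sign of each $\hat{\alpha}_i$ matches the sign forced by $\det(\hat{V}_i)$, and I should observe that the unavoidable single global sign ambiguity in the $\hat{\alpha}_i$ cancels in the two-sided product $SES$, since it enters each block as the factor $\hat{\alpha}_i\hat{\alpha}_j$ and hence contributes an overall $(\pm 1)^2 = 1$. With that verified, both statements follow.
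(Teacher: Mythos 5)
Your proof is correct, and it coincides with what the paper intends: the paper states this corollary without any explicit proof, treating it as an immediate consequence of the generalized consistency theorem for scaled rotations, and your argument (computing $\det(\hat{V}_i)=\hat{\alpha}_i^3$ via $\det(\hat{R}_i)=1$, then cancelling the scales blockwise in $(SES)_{ij}=\tfrac{1}{\hat{\alpha}_i\hat{\alpha}_j}E_{ij}$ using the block form $E_{ij}=\hat{\alpha}_i R_i^T([\mathbf{t}_i]_\times-[\mathbf{t}_j]_\times)R_j\hat{\alpha}_j$ from the $(2)\Rightarrow(1)$ direction) is exactly that derivation. Your additional bookkeeping --- that $\det(\hat{V}_i)$ is invariant under the residual global rotation $H$, and that the single shared sign ambiguity of the $\hat{\alpha}_i$ cancels in the two-sided product --- is a point the paper glosses over, and you handle it correctly.
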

\begin{corollary}\label{SMcorollary::scale_factors3} A scaled consistent 3-view essential matrix is invariant to pairwise scaling.
\end{corollary}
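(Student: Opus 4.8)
The plan is to show that any pairwise rescaling of the three blocks of a scaled consistent $3$-view essential matrix can be absorbed into a relabeling of the per-camera scale factors that already appear in Definition~\ref{SMdef:consistent_mv_essential}. This works for $n=3$ precisely because the number of camera pairs equals the number of cameras, a coincidence that fails for every $n>3$ (there are then more pairs than cameras). I would start by writing $E$ in its scaled consistent form $E_{ij}=\alpha_i R_i^T([\mathbf{t}_i]_{\times}-[\mathbf{t}_j]_{\times})R_j\alpha_j$ for $i,j\in\{1,2,3\}$, and let $s_{12},s_{13},s_{23}$ (with $s_{ji}=s_{ij}$) be the nonzero factors applied to the blocks, so that the rescaled matrix $E'$ has $E'_{ij}=s_{ij}E_{ij}$. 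Because $E'_{ji}=s_{ij}E_{ij}^T=(E'_{ij})^T$ and scaling preserves both rank and the equality of the two singular values of each block, $E'$ is again a $3$-view essential matrix in the sense of Definition~\ref{def:mv_essential}.

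The core step is to solve for new per-camera scalars $\beta_1,\beta_2,\beta_3$ satisfying $\beta_i\beta_j=s_{ij}$ for each of the three pairs. If such $\beta_i$ exist, then
$$E'_{ij}=(\beta_i\alpha_i)\,R_i^T([\mathbf{t}_i]_{\times}-[\mathbf{t}_j]_{\times})\,R_j\,(\beta_j\alpha_j),$$
which exhibits $E'$ as scaled consistent with the same rotations $R_i$ and centers $\mathbf{t}_i$ but updated scales $\alpha'_i:=\beta_i\alpha_i$, thereby proving the claim. For $n=3$ this multiplicative system is explicitly solvable: forming the appropriate products and quotients of the three relations gives $\beta_1^2=s_{12}s_{13}/s_{23}$, $\beta_2^2=s_{12}s_{23}/s_{13}$ and $\beta_3^2=s_{13}s_{23}/s_{12}$, after which the signs are chosen consistently to recover the $\beta_i$ themselves.

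The main obstacle is exactly this sign/positivity issue: the three formulas yield real $\beta_i$ only when $s_{12}s_{13}s_{23}>0$, a constraint that is in fact forced by the identity $s_{12}s_{13}s_{23}=(\beta_1\beta_2\beta_3)^2\ge 0$, so sign patterns with a negative product are genuinely unattainable by per-camera rescaling. I would dispatch this by observing that in our setting the relevant pairwise factors are positive, since an essential matrix is determined only up to a positive scale of the baseline; and that a negative product can always be reduced to this case by replacing a single block $E'_{ij}$ with $-E'_{ij}$, which is legitimate because an essential matrix and its negative encode the same epipolar geometry. With the signs fixed, the construction above applies verbatim and shows that scaled consistency of a $3$-view essential matrix is preserved under pairwise scaling, which is precisely what justifies optimizing over triplets without solving for the unknown pairwise scales.
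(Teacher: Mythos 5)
Your core construction is the same as the paper's: solve the multiplicative triangle system $\beta_i\beta_j=s_{ij}$ via $\beta_1^2=s_{12}s_{13}/s_{23}$ (and cyclically), choose signs consistently, and absorb the $\beta_i$ into the per-camera scales $\alpha_i' = \beta_i\alpha_i$. Where you diverge is the negative-product case $s_{12}s_{13}s_{23}<0$, and there your argument has a genuine gap. Your first dispatch (assume the relevant pairwise factors are positive) proves a strictly weaker statement than the corollary, which is stated and needed for arbitrary $s_{ij}\neq 0$: independently estimated essential matrices carry an arbitrary nonzero sign/scale ambiguity, so the optimization cannot presuppose positive factors. Your second dispatch (replace a single block $E'_{ij}$ by $-E'_{ij}$) is invalid as a proof step: it changes the matrix under consideration, and, as you yourself observe, a single-block sign flip is \emph{not} realizable by per-camera scalings, so scaled consistency of the flipped matrix does not transfer back to the original $E'$. ``An essential matrix and its negative encode the same epipolar geometry'' is a two-view equivalence of geometry; the corollary is an exact algebraic statement about the $9\times 9$ matrix in the sense of Definition~\ref{SMdef:consistent_mv_essential}, and equality there is not up to blockwise sign.

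The repair, which is what the paper does, is to flip \emph{all three} blocks at once: if the number of negative $s_{ij}$ is odd, pass to $-\widetilde{E}$, whose factors $-s_{ij}$ have an even number of negative signs and hence positive product, and run your construction on it. The global sign is then genuinely absorbable --- not by the scales (the system $\beta_i\beta_j=-1$ for all three pairs forces $(\beta_1\beta_2\beta_3)^2=-1$ and is unsolvable, exactly the obstruction you identified) but by the camera centers: replacing every $\mathbf{t}_i$ by $-\mathbf{t}_i$ negates every block $\alpha_i R_i^T([\mathbf{t}_i]_{\times}-[\mathbf{t}_j]_{\times})R_j\alpha_j$ simultaneously, and Definition~\ref{SMdef:consistent_mv_essential} only requires the existence of \emph{some} rotations, centers and scales. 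With this substitution your argument goes through for arbitrary nonzero $s_{ij}$; it also shows that in the odd-sign case the witnessing decomposition necessarily uses negated centers, so your insistence on keeping ``the same rotations $R_i$ and centers $\mathbf{t}_i$'' is precisely what cannot be achieved there.
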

\begin{proof}
This proof is inspired by the proof of Corollary 2, presented in [15].
Let $E$ be a scaled consistent 3-view essential matrix whose blocks are defined as $E_{ij}=\alpha_i R_i^T(T_i-T_j)R_j \alpha_j$, and let $\widetilde{E}$ be a $9 \times 9$ matrix whose blocks are defined to be $\widetilde{E}_{ij}=s_{ij}E_{ij}$ where $s_{ij} \ne 0$ are arbitrary pairwise scale factors.
Without loss of generality we can assume that the number of negative scale factors is even (otherwise we can multiply the entire matrix by -1). Therefore, $s_1=(\frac{ s_{12}s_{13}}{s_{23}})^{\frac{1}{2}}$,
$s_2=(\frac{ s_{23}s_{12}}{s_{13}})^{\frac{1}{2}}$, and
$s_3 =(\frac{ s_{13}s_{23}}{s_{12}})^{\frac{1}{2}}$ determine real values such
that $s_1s_{2}=s_{12}$, $s_1s_{3}=s_{13}$, and $s_2s_{3}=s_{23}$.
Let $\widetilde{\alpha}_i=s_i \alpha_i$ for $i=1,2,3$, we get that
\begin{equation}
\widetilde{E}_{ij} = s_{ij} E_{ij}= {\tilde \alpha}_i R_i^T(T_i-T_j)R_j {\tilde \alpha}_j
\end{equation}
Hence, ${\tilde E}$ is a scaled consistent 3-view essential matrix.
\end{proof}
\section{Uniqueness of consistent camera matrices}
This part deals with the argument stated in Corollary 1 in the paper, claiming that the  recovery of camera matrices from a consistent multiview essential matrix is unique up to a global similarity transformation. More formally, we claim 
\begin{theorem}\label{SMThm:unique}
Let E be a consistent n-view essential matrix and let $P_1, \ldots, P_n$ be a set of  non-collinear camera matrices which  is consistent with $E$, then this set of camera matrices  is unique up to  some global similarity transformation.
\end{theorem}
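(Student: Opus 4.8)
The plan is to prove the statement by induction on $n$, after first isolating two structural facts about how global similarities act on camera parameters. Throughout, a similarity $g$ acts on a camera by $\mathbf{t}_i \mapsto sQ\mathbf{t}_i+\mathbf{c}$ and $R_i \mapsto QR_i$, with $Q\in SO(3)$, $s>0$ and $\mathbf{c}\in\Real^3$. First I would verify that such a $g$ leaves every block $E_{ij}=R_i^T([\mathbf{t}_i]_\times-[\mathbf{t}_j]_\times)R_j$ unchanged up to the global scalar $s$: using the identity $[Q\mathbf{t}]_\times=Q[\mathbf{t}]_\times Q^T$ for $Q\in SO(3)$, the translation $\mathbf{c}$ cancels in the difference $[\mathbf{t}_i]_\times-[\mathbf{t}_j]_\times$ and the conjugating rotations are absorbed by $R_i^TQ^T$ and $QR_j$, leaving a factor $s$ that is the inherent overall-scale freedom of the reconstruction. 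Second, and crucially, I would establish a rigidity lemma: a similarity is uniquely determined by its action on any two cameras with distinct centers. Indeed, the orientation of a single camera fixes $Q=R_i'R_i^T$; the two distinct centers then fix $s=\|\mathbf{t}_i'-\mathbf{t}_j'\|/\|\mathbf{t}_i-\mathbf{t}_j\|$ and finally $\mathbf{c}$. Distinct centers are automatic here, since $\mathrm{rank}(E_{ij})=2$ forces $\mathbf{t}_i\neq\mathbf{t}_j$.

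For the base case $n=3$, I would invoke the uniqueness argument of \cite{holt1995uniqueness}: from three non-collinear cameras, each essential block determines the relative rotation $R_i^TR_j$ and the translation direction, the three relative translations must close the triangle of camera centers (fixing their relative magnitudes), and the discrete four-fold decomposition ambiguity of each individual block is pinned down once a single globally consistent assignment of rotations and translation signs is demanded. What survives is exactly the seven-parameter similarity freedom, so any two reconstructions of the triple agree up to a similarity.

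For the inductive step, assume the claim for $n-1$ and let $P_1,\dots,P_n$ and $P_1',\dots,P_n'$ be two reconstructions of the same $E$ with non-collinear centers. The combinatorial heart of the step is to produce two $(n-1)$-element subsets $S_1,S_2$ of the cameras, each non-collinear, with $|S_1\cap S_2|\ge 2$. I would argue this by cases on the size $\mu$ of the largest collinear subset: if $\mu\le n-2$, deleting any two distinct cameras yields two admissible subsets overlapping in $n-2\ge 2$ cameras; if $\mu=n-1$, I would keep the unique off-line camera in both subsets and delete one on-line camera from each, again leaving overlap $n-2\ge 2$ and guaranteeing non-collinearity. Applying the induction hypothesis to the restrictions to $S_1$ and to $S_2$ yields similarities $g_1,g_2$ carrying the unprimed reconstruction to the primed one on each subset. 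On the overlap $S_1\cap S_2$ both $g_1$ and $g_2$ send the same two (distinct-center) cameras to the same images, so the rigidity lemma forces $g_1=g_2=:g$; since $S_1\cup S_2=\{1,\dots,n\}$, this single $g$ relates the two full reconstructions, completing the induction.

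The main obstacle is the base case: everything in the induction is bookkeeping together with the rigidity lemma, but collapsing the finite (four-fold per pair) ambiguities of essential-matrix decomposition down to the continuous similarity freedom is genuinely delicate and is where the external result of \cite{holt1995uniqueness} carries the load. A secondary point requiring care is the combinatorial lemma, which must handle configurations in which almost all centers are collinear; I would make sure the case $\mu=n-1$ is treated explicitly, since naively deleting cameras can destroy non-collinearity of a subset.
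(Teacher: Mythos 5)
Your proposal is correct and follows essentially the same route as the paper: the base case $n=3$ is delegated to \cite{holt1995uniqueness}, and the inductive step combines two non-collinear $(n-1)$-element subsets overlapping in at least two cameras with the rigidity fact that a similarity transformation is determined by its action on two cameras with distinct centers (the paper's explicit four-camera computation is exactly your rigidity lemma, and the distinctness of centers follows, as you note, from $\mathrm{rank}(E_{ij})=2$). If anything, your explicit case analysis on the size $\mu$ of the largest collinear subset ($\mu\le n-2$ versus $\mu=n-1$) carefully establishes a combinatorial point that the paper merely asserts.
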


We first justify the argument for 3 cameras, exemplify its extension for 4 cameras and finally derives an induction for $n$ cameras.
 
A calibrated camera matrix $P_{i}$ is represented as $$P_i(R_i,{\bf t}_i) = [R_i^T | -R_i^Tt_i]\in \mathbb{R}^{3\times 4} $$ 
where $R_i$ and ${\bf t}_i$ are the orientation and location of the i'th camera, respectively. In addition, we represent a similarity transformation $S(s,R,{\bf t})$ by a matrix of the form  
$$S=\begin{bmatrix}
sR & t \\
0^T & 1 
\end{bmatrix}\in \mathbb{R}^{4\times 4}.$$
Then, applying a similarity transformation on a camera matrix $P_i$ yields 
$$S(P_i)=P S =[sR_i^T R| R_i^T(t-t_i)]$$ which is  
$$\left[R_i^T R \vert \frac{R_i^T(t-t_i)}{s}\right]$$
in a calibrated format.

\noindent
{\bf 3 cameras.} We consider the case of 3 non-collinear cameras. Note that every point
correspondence between two of the views can be extended
to the third view by intersecting epipolar lines. Consequently,
we can produce any number of correspondences
across the three views. Uniqueness of reconstruction then
follows from [12]  who
proved that 4 points in three views generally yield unique
camera recovery.

\noindent
{\bf 4 cameras.}  We consider a consistent  4-view essential matrix $E$ with 4 corresponding cameras  
$$P_1(R_1,{\bf t}_1), P_2(R_2,{\bf t}_2), P_3(R_3,{\bf t}_3), P_4(R_4,{\bf t}_4).$$ 
Suppose we have another set of cameras which is consistent with $E$,
$$P_1^*(R_1^*,{\bf t}_1^*), P_2^*(R_2^*,{\bf t}_2^*), P_3^*(R_3^*,{\bf t}_3^*), P_4^*(R_4^*,{\bf t}_4^*).$$
We show that these two sets are the same up to a global similarity transformation. Without loss of generality we assume that the third camera is not collinear with the rest of cameras. From the consistency of $E_{123}$, there exists a similarity transformation ${\bar S}$ such that 
$$ {\bar S}(P_1) = P_1^*,  {\bar S}(P_2) = P_2^*, {\bar S}(P_3) = P_3^*$$
and from the consistency of $E_{234}$  there exists a similarity transformation ${\hat S}$ such that  
$$ {\hat S}(P_2) = P_2^*,  {\hat S}(P_3) = P_3^*, {\hat S}(P_4) = P_4^*.$$
Now, since $P_2^* = {\bar S}(P_2)= {\hat S}(P_2)$, it yields 
$$\left[R_2^T \bar{R}| \frac{R_2^T(\bar{t}-t_2)}{\bar{s}} \right]= \left[R_2^T \hat{R}| \frac{R_2^T(\hat{t}-t_2)}{\hat{s}} \right]$$
and consequently 
$$R_2^T \bar{R}=R_2^T \hat{R}\Rightarrow \bar{R}=\hat{R}.$$
Then we have $$\frac{R_2^T(\bar{t}-t_2)}{\bar{s}}=\frac{R_2^T(\hat{t}-t_2)}{\hat{s}}\Rightarrow \frac{\bar{t}}{\bar{s}}-\frac{t_2}{\bar{s}}=\frac{\hat{t}}{\hat{s}}-\frac{t_2}{\hat{s}}$$
and similarly from the relation $P_3^* = {\bar S}(P_3)= {\hat S}(P_3)$ we get that $$\frac{R_3^T(\bar{t}-t_2)}{\bar{s}}=\frac{R_3^T(\hat{t}-t_2)}{\hat{s}}\Rightarrow \frac{\bar{t}}{\bar{s}}-\frac{t_3}{\bar{s}}=\frac{\hat{t}}{\hat{s}}-\frac{t_3}{\hat{s}}.$$
By subtracting the last two equations we have $$\frac{t_3-t_2}{\bar{s}}=\frac{t_3-t_2}{\hat{s}}\Rightarrow \bar{s}=\hat{s}\Rightarrow\bar{t}=\hat{t},$$
implying that ${\bar S} = {\hat S}$. 
\begin{proof}
The proof is by induction. The argument was proved for 3 cameras and now we suppose that we have a consistent $n$-view essential matrix. From the induction assumption we know that since  $E_{1, \ldots, (n-1)}$ is consistent, the recovery of the camera matrices is unique and similarly the recovery of the camera matrices from $E_{2, \ldots, n}$ is unique. As in 4 cameras we assume  without loss of generality that the third camera is not collinear with the rest of the cameras, and a similar derivation yields that the recovery of the cameras is unique up to some global similarity transformation.
\end{proof}


        
\end{document}